\documentclass{article}

\usepackage{microtype}
\usepackage{graphicx}
\usepackage{subfigure}
\usepackage{booktabs} % for professional tables

\usepackage{hyperref}

\usepackage[accepted]{icml2026}

\usepackage{amsmath}
\usepackage{amssymb}
\usepackage{mathtools}
\usepackage{amsthm}

\usepackage[capitalize,noabbrev]{cleveref}

\theoremstyle{plain}
\newtheorem{theorem}{Theorem}[section]
\newtheorem{proposition}[theorem]{Proposition}

\theoremstyle{definition}

\theoremstyle{remark}

\usepackage[textsize=tiny]{todonotes}

\usepackage{multirow}
\usepackage{booktabs}       % professional-quality tables
\usepackage{graphicx} 
\usepackage{subcaption}
\usepackage{xspace}
\newcommand{\sys}{\texttt{CauScale}\xspace}

\newcommand{\reductionunit}{\emph{reduction unit}}
\newcommand{\datagraphblock}{\emph{data-graph block}}
\newcommand{\datatographlayer}{data2graph~layer}
\newcommand{\datalayer}{data layer}
\newcommand{\graphlayer}{graph layer}

\newcommand{\data}{data}

\definecolor{PB}{RGB}{128, 0, 128}

\icmltitlerunning{\sys: Neural Causal Discovery at Scale}

\begin{document}

\twocolumn[
\icmltitle{\sys: Neural Causal Discovery at Scale}

% It is OKAY to include author information, even for blind
% submissions: the style file will automatically remove it for you
% unless you've provided the [accepted] option to the icml2026
% package.

% List of affiliations: The first argument should be a (short)
% identifier you will use later to specify author affiliations
% Academic affiliations should list Department, University, City, Region, Country
% Industry affiliations should list Company, City, Region, Country

% You can specify symbols, otherwise they are numbered in order.
% Ideally, you should not use this facility. Affiliations will be numbered
% in order of appearance and this is the preferred way.
% \icmlsetsymbol{equal}{*}

\begin{icmlauthorlist}
\icmlauthor{Bo Peng}{sjtu,ailab,chuangzhi}
\icmlauthor{Sirui Chen}{ailab,tongji}
\icmlauthor{Jiaguo Tian}{sjtu}
\icmlauthor{Yu Qiao}{ailab,chuangzhi}
\icmlauthor{Chaochao Lu}{ailab}
%\icmlauthor{}{sch}
%\icmlauthor{}{sch}
\end{icmlauthorlist}

\icmlaffiliation{sjtu}{Shanghai Jiao Tong University}
\icmlaffiliation{ailab}{Shanghai Artificial Intelligence Laboratory}
\icmlaffiliation{chuangzhi}{Shanghai Innovation Institute}
\icmlaffiliation{tongji}{Tongji University}

\icmlcorrespondingauthor{Chaochao Lu}{luchaochao@pjlab.org.cn}

% You may provide any keywords that you
% find helpful for describing your paper; these are used to populate
% the "keywords" metadata in the PDF but will not be shown in the document
\icmlkeywords{Machine Learning, ICML}

\vskip 0.3in
]

% this must go after the closing bracket ] following \twocolumn[ ...

% This command actually creates the footnote in the first column
% listing the affiliations and the copyright notice.
% The command takes one argument, which is text to display at the start of the footnote.
% The \icmlEqualContribution command is standard text for equal contribution.
% Remove it (just {}) if you do not need this facility.

\printAffiliationsAndNotice{}  % leave blank if no need to mention equal contribution
% \printAffiliationsAndNotice{\icmlEqualContribution} % otherwise use the standard text.

\begin{abstract}
Causal discovery is essential for advancing data-driven fields such as scientific AI and data analysis, yet existing approaches face significant time- and space-efficiency bottlenecks when scaling to large 
graphs. To address this challenge, we present \sys, a neural architecture designed for efficient causal discovery that scales inference to graphs with up to 1000 nodes. \sys improves time efficiency via a reduction unit that compresses data embeddings and improves space efficiency by adopting tied attention weights to avoid maintaining axis-specific attention maps. To keep high causal discovery accuracy, \sys adopts a two-stream design: a data stream extracts relational evidence from high-dimensional observations, while a graph stream integrates statistical graph priors and preserves key structural signals. \sys successfully scales to 500-node graphs during training, where prior work fails due to space limitations. Across testing data with varying graph scales and causal mechanisms, \sys achieves 99.6\% mAP on in-distribution data and 84.4\% on out-of-distribution data, while delivering 4$\times$–13,000$\times$ inference speedups over prior methods. Our project page is at \href{https://github.com/OpenCausaLab/CauScale}{https://github.com/OpenCausaLab/CauScale}.
\end{abstract}

\section{Introduction}
Causal discovery focus on uncovering causal relationships and mechanisms from observational data~\citep{spirtes2000causation,pearl2009causality,glymour2019review}. A central component of causal discovery is causal structure learning, which identifies the underlying structural causal models (SCMs) and learns directed acyclic graphs (DAGs) where edges represent direct causal relationships between variables~\citep{peters2017elements}. 
Causal relationship inference is an important problem across many fields including bioinformatics~\citep{sachs2005causal,zhang2013integrated}, epidemiology~\citep{vandenbroucke2016causality}, and economics~\citep{hicks1980causality}. 

As data grow increasingly complex, discovering causal relationships from massive datasets has become an urgent challenge. However, existing causal discovery algorithms face major time- and space-efficiency bottlenecks, particularly when scaling to large graphs.
Constraint-based algorithms (e.g., PC and FCI~\citep{spirtes2000causation}) can become time-prohibitive because they rely on large numbers of conditional-independence tests, whose count grows exponentially in the worst case. In contrast, score-based methods such as NOTEARS~\citep{zheng2018dags} and RL-BIC~\citep{zhu2019causal} avoid explicit combinatorial search but typically require solving a fresh continuous optimization problem for each dataset, which remains computationally expensive at scale.
To reduce runtime, AVICI~\citep{lorch2022amortized} amortizes causal discovery by pretraining a supervised model on simulated data and performing zero-shot graph prediction at test time. However, its attention mechanism scales unfavorably with the number of variables, often leading to substantial memory pressure on large graphs.

To overcome these time- and space-efficiency bottlenecks, we propose \sys, an efficient neural architecture for amortized causal discovery. Overall, \sys adopts a two-stream design with a data stream and a graph stream. For time efficiency, we introduce a \reductionunit~ that compresses the data embeddings during network processing. For space efficiency, we adopt tied attention weights~\citep{rao2021msa} in both streams: sharing attention weights across axis avoids maintaining axis-specific attention maps and substantially reduces the memory footprint of attention. To improve efficiency without sacrificing discovery quality, we further design a \datagraphblock~that (i) injects graph-prior information and (ii) mitigates information loss from data reduction. Specifically, it distills relational evidence from high-dimensional data into a graph message to guide representation learning in the graph stream. Moreover, it fuses the two streams by injecting the data stream into the graph embedding before reduction, so that the model preserves key relational signals and alleviates information loss.

We conduct extensive experiments on synthetic and single-cell expression datasets with varying sizes and causal structures. The results demonstrate that \sys achieves superior accuracy with markedly improved efficiency. Specifically, \sys achieves an mAP of 99.6\% on in-distribution data and 84.4\% on out-of-distribution (OOD) data. It stand out as the fastest method, outperforming previous approaches by 4$\times$ to 13,000$\times$. Furthermore, during training, \sys scales successfully to 500-node graphs, a setting where AVICI fails due to limited memory and excessive space costs.

In summary, our contributions are:
\begin{itemize}
    \item We present one of the first studies on pre-training neural networks for efficient causal discovery at scale, offering a scalable step toward uncovering causal relations from increasingly complex data.
    \item We introduce \sys, a neural architecture that jointly improves time and memory efficiency with high causal discovery accuracy.
    \item We conduct comprehensive experiments to validate the effectiveness of \sys across varying graph scales and causal mechanisms, demonstrating that \sys improves both efficiency and causal discovery performance.
\end{itemize}

\section{Related Work}
% We categorize existing causal structure learning methods into 
% \emph{non-amortized} and \emph{amortized (zero-shot)} causal discovery approaches, 
% based on whether inference on a new dataset requires solving a dataset-specific optimization problem. 
% This distinction reflects a fundamental difference in inference regimes and has important implications for scalability and deployment.
Existing causal structure learning methods can be broadly categorized into \emph{non-amortized} and \emph{amortized (zero-shot)} approaches, relying on whether inference on a new dataset requires solving a dataset-specific optimization problem.

\noindent \textbf{Non-amortized causal discovery.}
% Non-amortized causal discovery methods formulate inference as an optimization or search problem that must be solved independently for each dataset.
% Given a dataset, these approaches explicitly optimize over graph structures or model parameters, making inference computationally expensive and tightly coupled to the dataset at hand.
Non-amortized methods perform causal discovery by solving an optimization or search problem independently for each dataset, leading to high computational cost and limited scalability.
1) \textbf{Constraint-based algorithms}, such as PC and FCI \citep{spirtes2000causation}, infer graph structures via conditional independence tests but suffer from exponential complexity as the number of variables grows.
% use conditional independence tests to identify edges, with PC and FCI \citep{spirtes2000causation} being the most foundational examples. However, these algorithms often suffer from high computational complexity due to the exponential growth of the search space as the number of variables increases.
% for observational data, while COmbINE~\citep{triantafillou2015constraint} and JCI~\citep{mooij2020joint} supports for interventional data. 
2) \textbf{Score-based algorithms} optimize a predefined score over the space of graph structures \citep{tsamardinos2006max,goudet2017causal}. Classical approaches rely on greedy combinatorial search, including GES \citep{chickering2002optimal} and GIES \citep{hauser2012characterization}. To improve scalability, recent work reformulates causal discovery as continuous optimization with differentiable acyclicity constraints \citep{zheng2018dags,lachapelle2019gradient,ke2019learning,zhu2019causal,brouillard2020differentiable}. NOTEARS \citep{zheng2018dags} introduces a smooth acyclicity constraint, while SDCD \citep{nazaretstable} further improves stability via spectral constraints and staged optimization.
% search all possible strutures over the space to optimize a specificed metric~\citep{tsamardinos2006max,goudet2017causal}. Classical algorithms rely on greedy combinatorial search over the space of equivalence classes; prominent examples include GES~\citep{chickering2002optimal} for observational data and its extension GIES~\citep{hauser2012characterization} for interventional data. To overcome the scalability limits of discrete search, subsequent approaches shifted towards formulating the problem as a continuous optimization task~\cite{zheng2018dags,lachapelle2019gradient,ke2019learning,zhu2019causal,brouillard2020differentiable}. NOTEARS~\citep{zheng2018dags} pioneered this direction by imposing a smooth acyclicity constraint to enable gradient-based optimization. Building on this, SDCD \citep{nazaretstable} enhances numerical stability and scalability through a spectral acyclicity constraint and a two-stage optimization procedure.
3) \textbf{Functional Causal Model (FCM) based methods} exploit asymmetries in the data-generating process for identifiability. Early approaches such as LiNGAM \citep{shimizu2006linear} rely on Independent Component Analysis (ICA) \citep{hyvarinen2001independent}, whereas recent methods integrate deep generative models. For example, DiffAN \citep{sanchez2023diffusion} frames causal discovery as topological sorting using diffusion-based score estimators.
% exploit functional asymmetries in the data generation process to achieve identifiability. While traditional FCM approaches, such as LiNGAM \citep{shimizu2006linear}, typically rely on Independent Component Analysis (ICA), more recent advancements have begun integrating deep generative models. A notable example is DiffAN \citep{sanchez2023diffusion}, which formulates causal discovery as a topological sorting problem and solves it by analyzing the Hessian of a diffusion-based score estimator.
%
% Despite their methodological differences, these approaches share a common limitation: the prohibitive computational cost of per-dataset optimization makes them unsuitable for real-time applications or scenarios requiring rapid inference on batched data.\PB{also not sure about ``per-data'' name.}
% Despite their methodological diversity, non-amortized methods share a common limitation: inference requires dataset-specific optimization, resulting in high computational cost and making them unsuitable for real-time inference or large-scale deployment.
4) \textbf{Granger causality based methods} infer directional relationships from 
temporal precedence, and are therefore designed for \emph{sequential} rather than 
i.i.d.\ data. Classical approaches exploit spectral decompositions of multivariate 
time series \citep{gallagher2017cross}, while neural extensions capture nonlinear 
dependencies via autoregressive networks \citep{tank2021neural}. Recent work further 
leverages generative factor models to hypothesize dynamic causal graphs from observed 
time series \citep{brown2025generating}.
Despite their diversity, non-amortized methods require dataset-specific optimization, making them computationally expensive and unsuitable for large-scale or real-time inference.

% \citet{zheng2018dags,lachapelle2019gradient,brouillard2020differentiable,lippe2021efficient} train generative models for empirical data distribution and parameterize through adjacency matrix. \citet{sanchez2023diffusion,reizinger2023jacobian} focus on other related properties such as topological order. These methods need to be trained from scratch for each dataset. 
\begin{figure*}[t]
    \centering
    \includegraphics[width=1\textwidth]{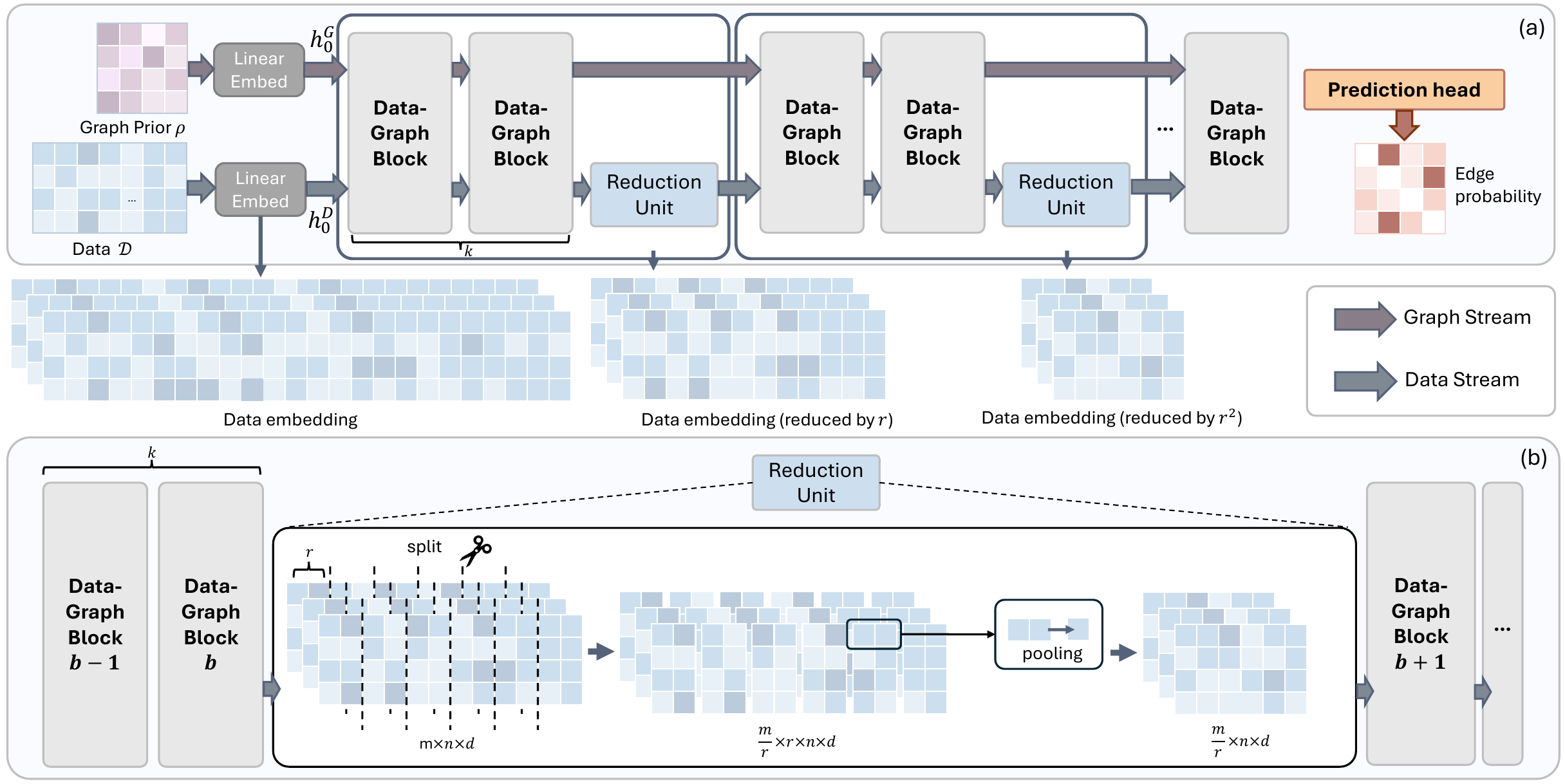}
    \caption{The architecture of \sys. (a) The overall architecture and the changes of data embedding size during network processing. (b) The reduce operation in \reductionunit. Between each $k$ \datagraphblock s, the \reductionunit~pool the data embedding along the observation dimension to reduce it with a fraction of $r$.}
    \label{fig:model_architecture}
    % \vspace{-5pt}
\end{figure*}

\noindent \textbf{Amortized (zero-shot) causal discovery.}
% Amortized causal discovery methods aim to eliminate dataset-specific optimization at inference time by learning a shared inference model that directly maps datasets to causal graphs. By amortizing the cost of inference into a pretraining phase on large-scale datasets, these approaches enable zero-shot causal discovery on unseen data \citep{lorch2022amortized,ke2023learning,DBLP:journals/tmlr/WuBBJ25,DBLP:conf/iclr/DhirARW25}. AVICI~\citep{lorch2022amortized} pioneers this direction by amortizing variational inference for causal discovery using both observational and interventional data. CSIva~\citep{ke2023learning} uses an encoder-decoder transformer architecture to sequentially predict edge probabilities. While effective, these methods incur substantial computational and memory overhead due to high-dimensional data embeddings, which scale poorly with the number of variables. SEA \citep{DBLP:journals/tmlr/WuBBJ25} attempts to address this via a sample-estimate-aggregate pipeline that decomposes large graphs into smaller subproblems. Although this design enables inference on larger graphs, its reliance on classical algorithms such as GIES as base estimators significantly limits inference speed compared to fully end-to-end neural approaches. Moreover, partitioning variables into sub-batches introduces information loss in cross-node dependencies; maintaining performance therefore requires extensive sub-batch sampling, which further increases computational overhead.
Amortized approaches aim to eliminate dataset-specific optimization by learning a shared inference model that maps datasets directly to causal graphs, enabling zero-shot inference on unseen data \citep{lorch2022amortized,ke2023learning,DBLP:journals/tmlr/WuBBJ25,DBLP:conf/iclr/DhirARW25}. \citet{lorch2022amortized,ke2023learning} pioneer amortized variational inference for causal discovery. However, these methods rely on high-dimensional embeddings that scale poorly with graph size. SEA \citep{DBLP:journals/tmlr/WuBBJ25} mitigates this issue by decomposing large graphs into subproblems, but its reliance on classical estimators such as GIES and extensive sub-batch sampling limits inference speed and causes information loss across variable partitions.
In contrast, \sys reduces computation by compressing data embeddings while mitigating information loss via a \datagraphblock, achieving superior efficiency without sacrificing discovery quality.
\section{Preliminary}
\noindent \textbf{Causal graphical models.}
A causal graphical model (CGM)~\citep{peters2017elements} consists of (i) a joint distribution \(P_X\) over random variables \(X=(X_1,\ldots,X_n)\) and (ii) a directed acyclic graph \(G=(V,E)\).
Each node \(i\in V\) corresponds to a variable \(x_i\), and each directed edge \((i,j)\in E\) encodes a direct causal influence from \(x_i\) to \(x_j\).
The distribution \(P_X\) is \emph{Markov} with respect to \(G\), i.e., $p(x_1,\ldots,x_n)=\prod_{j=1}^{n} p(x_j \mid \text{PA}_j)$, 
where \(\text{PA}_j\) denotes the parent set of node \(j\).
\emph{Causal sufficiency} is assumed, meaning there are no unobserved common causes that jointly affect multiple variables in \(X\).

\noindent \textbf{Interventions.}
CGMs support interventions by modifying the conditional mechanism of target variables.
An intervention on node \(j\) replaces the conditional distribution \(p(x_j\mid \text{PA}_j)\) with \(\tilde{p}(x_j\mid \text{PA}_j)\).
Two settings are considered: the \emph{observational} setting (no interventions), and \emph{perfect interventions}, where the intervened variable is randomized independently of its parents, i.e., \(\tilde{p}(x_j\mid \text{PA}_j)=\tilde{p}(x_j)\).

\section{\sys}

\subsection{Overall Architecture}
Let \(n\) denote the number of graph nodes and \(m\) the number of observational samples. 
The input \data~\(\mathcal{D}\in\mathbb{R}^{m\times n\times 2}\) concatenates observational variables \(D\in\mathbb{R}^{m\times n}\) and a binary intervention indicator \(I\in\{0,1\}^{m\times n}\), where \(I=1\) indicates that variable is intervened.
The model takes \(\mathcal{D}\) and a statistical graph prior \(\rho\in\mathbb{R}^{n\times n}\) computed from \(D\) as inputs, and outputs a probabilistic adjacency matrix \(\hat{G}\in\mathbb{R}^{n\times n}\) representing the likelihood of directed causal relations.
The prior \(\rho\) is defined as the inverse covariance matrix:
\begin{equation*}
\begin{aligned}
\rho \;=\; \Big(\mathbb{E}\!\big[(D-\mu)(D-\mu)^\top\big]\Big)^{-1}, 
\mu=\mathbb{E}[D]
\end{aligned}
\end{equation*}
The inputs \(\mathcal{D}\) and \(\rho\) are encoded into initial embeddings
\(h^{\mathcal{D}}\in\mathbb{R}^{m\times n\times d}\) and \(h^{G}\in\mathbb{R}^{n\times n\times d}\) via linear layers, where \(d\) is the embedding dimension.
These embeddings are then processed by alternating stacks of \datagraphblock~and \reductionunit.
Each \datagraphblock~updates both the data and graph streams (Section~\ref{sec:datagraphblock}). 
Every \(k\) blocks, the \reductionunit~pools the data-stream embedding along the sample dimension to reduce its length by a factor of \(r\) (Section~\ref{sec:reduce_unit}).
Within each \datagraphblock, we employ tied attention weights to reduce memory overhead (Section~\ref{attention}).
Finally, the graph-stream output is fed into a prediction head to produce \(\hat{G}\) (Section~\ref{sec:pred_head}). Figure \ref{fig:model_architecture} shows the overall architecture of \sys.

\begin{figure*}[t]
    \centering
    \includegraphics[width=.95\textwidth]{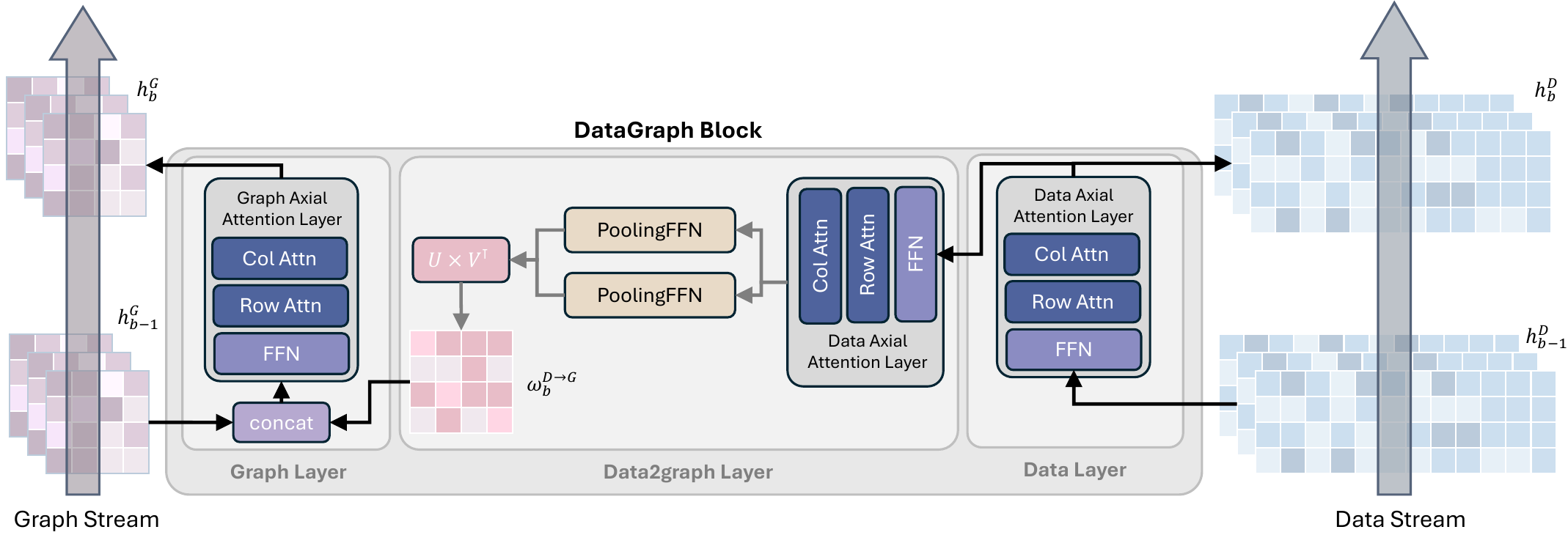}
    \caption{Structure of the \emph{DaraGraph Block}. The \datagraphblock~process information on data and graph stream. On data stream, after being processed by the data axial attention layer, data embedding $h_b^D$ is sent to both the next module on data stream and summarized by the \datatographlayer~to graph message $\omega_b^{D\to G}$. The message will be concatenated with previous graph embedding $h_{b-1}^G$ and processed by \graphlayer~in graph stream.}
    \label{fig:block}
    % \vspace{-5pt}
\end{figure*}

\subsection{DataGraph Block}
\label{sec:datagraphblock}

As shown in Figure~\ref{fig:block}, each \datagraphblock~consists of three modules: a \datalayer, a \datatographlayer, and a \graphlayer. 
Given the incoming data and graph embeddings \((h_{b-1}^D,\, h_{b-1}^G)\), the block proceeds in three steps:
(1) The \datalayer~updates the data stream embedding, producing \(h_b^D\). This updated embedding is forwarded to the next \datagraphblock~and, when applicable, to the \reductionunit~for compression.
(2) The \datatographlayer~summarizes \(h_b^D \in \mathbb{R}^{m\times n\times d}\) into an observation-compressed relation matrix \(\omega_b^{D\to G}\in\mathbb{R}^{n\times n}\), which captures node relationship information.
(3) The \graphlayer~injects this message into the graph stream by concatenating \(\omega_b^{D\to G}\) with the previous graph embedding \(h_{b-1}^G\in\mathbb{R}^{n\times n\times d}\), and produces the updated graph embedding \(h_b^G\).

\noindent \textbf{Data2Graph layer.}
The \datatographlayer~extracts pairwise relational evidence from the data stream and summarizes it into a graph message.
Given the data embedding \(h_b^D \in \mathbb{R}^{m\times n\times d}\), we first apply a data axial-attention layer to obtain
\(h^{D\to G}\in\mathbb{R}^{m\times n\times d}\).
We then map \(h^{D\to G}\) to two node-level embeddings \(u^{D\to G}, v^{D\to G}\in\mathbb{R}^{n\times d}\) using two separate PoolingFFN modules, each performing average pooling over the observation dimension followed by an MLP.
Finally, we form $\omega_b^{D\to G} = u^{D\to G}\,(v^{D\to G})^\top \in \mathbb{R}^{n\times n}$, 
which represents directed pairwise relations between variables.

\noindent \textbf{Graph layer.}
The \graphlayer~injects \(\omega_b^{D\to G}\) into the graph stream by concatenating it with the previous graph embedding \(h_{b-1}^G\in\mathbb{R}^{n\times n\times d}\), yielding
\(h_b^{G'}\in\mathbb{R}^{n\times n\times(d+1)}\).
A linear projection maps \(h_b^{G'}\) back to \( \mathbb{R}^{n\times n\times d}\), which is then processed by a graph axial-attention layer to produce the updated graph embedding \(h_b^G\in\mathbb{R}^{n\times n\times d}\).

\subsection{Reduction Unit}
\label{sec:reduce_unit}

Naively subsampling observations for estimation can discard informative samples and degrade causal discovery. 
Instead, we compress the \emph{data-stream embedding} during network processing, reducing computation while preserving the variable-wise embedding.
This design is motivated by three considerations.
(1) \textbf{Efficiency:} in typical causal discovery datasets, the number of observational samples \(m\) is often one to three orders of magnitude larger than the number of nodes \(n\). Compressing along the observation dimension therefore yields substantial computational savings.
(2) \textbf{Dependency structure:} causal signals are primarily expressed through dependencies among nodes within each observational sample. Under the standard i.i.d.\ assumption across samples, aggregating embeddings across observational samples is generally less destructive than collapsing the variable dimension.
(3) \textbf{Reduced information loss:} the reduction is applied after several \datagraphblock s have transformed raw inputs into more informative representations. Moreover, a Data2Graph module is executed before reduction to distill local relational signals into the graph stream, allowing the data stream to be compressed without losing critical structural evidence.

Accordingly, \sys~applies the \reductionunit~every \(k\) \datagraphblock s\footnote{In practice, the first reduction is applied after \(k+1\) \datagraphblock, allowing the network to build more expressive representations before compression begins.}. 
Given a data embedding \(h_b^D \in \mathbb{R}^{m\times n\times d}\) after block \(b\in\{0,\ldots,B{-}1\}\) and a reduction factor \(r\), we group the observation dimension into chunks of size \(r\) and average-pool within each chunk. 
When \(r \nmid m\), we set \(\hat{m}=r\lfloor m/r \rfloor\) and discard the last \(m-\hat{m}\) samples for convenience (replacing \(m\) with \(\hat{m}\)). 
Specifically, we reshape
\[
h_b^D:\; m\times n\times d \;\rightarrow\; \tfrac{m}{r}\times r\times n\times d,
\]
and apply average pooling over the group dimension of size \(r\), yielding the reduced embedding $\tilde{h}_b^D \in \mathbb{R}^{\tfrac{m}{r}\times n\times d}$.

\subsection{Tied Attention Weights}
\label{attention}
The core component in each stream within a \datagraphblock~is an axial-attention layer, which applies self-attention along two axis (row-wise and column-wise), followed by an FFN. Each sub-layer is wrapped with layer normalization, dropout, and residual connections.
To improve space efficiency, we adopt the tied attention weight mechanism from \citet{rao2021msa}, which avoids maintaining axis-specific attention maps and substantially reduces attention memory.
For illustration, consider attention along row-axis with
\(Q,K,V \in \mathbb{R}^{R \times C \times H \times d_{\text{head}}}\),
where \(R\) and \(C\) denote the row and column dimensions (e.g., \(R{=}m\), \(C{=}n\) for the data stream), \(H\) is the number of heads, and \(d_{\text{head}}\) is the head dimension.
Following \citet{rao2021msa}, we tie attention weights across rows and only store \(A\in\mathbb{R}^{H\times C\times C}\), while keeping the output shape unchanged:
\begin{equation*}
\begin{aligned}
A_{h,i,j} &= \textstyle\sum_{r=1}^{R} \textstyle\sum_{t=1}^{d_{\text{head}}} Q_{r,i,h,t}\cdot K_{r,j,h,t}, \\
O_{r,i} &= W^O \cdot \left[\textstyle\sum_{j=1}^{C} \text{softmax}_{j}\!\left(A_{h,i,j}\right)\cdot V_{r,j,h,:}\right]_h + b^O,
\end{aligned}
\end{equation*}

\subsection{Prediction Head}
\label{sec:pred_head}
After the final \datagraphblock, we take the graph-stream output \(h_{B-1}^G \in \mathbb{R}^{n\times n \times d}\), apply layer normalization, and feed it into a pairwise graph prediction head. 
Following \citet{DBLP:journals/tmlr/WuBBJ25} and \citet{lippe2021efficient}, we do not explicitly enforce acyclicity during prediction, since imposing DAG constraints typically requires additional constrained optimization or post-processing and can be computationally expensive. Moreover, real-world data sometimes contain cycles.
We adopt the decomposed head in \citet{DBLP:journals/tmlr/WuBBJ25}. For each unordered node pair \(\{i,j\}\) with \(i<j\), we compute logits over three edge states (no edge, \(i\!\to\! j\), \(j\!\to\! i\)) by
\begin{equation}
\label{eq:attn_head}
g_{\{i,j\}} = \mathrm{FFN}\!\left([h^{G}_{B-1,i,j},\, h^{G}_{B-1,j,i}]\right) \in \mathbb{R}^{3},
\end{equation}
where \([\cdot,\cdot]\) denotes concatenation. Collecting all pairs yields \(g \in \mathbb{R}^{\frac{N(N-1)}{2}\times 3}\), and we obtain probabilities via a softmax over the three states for each pair.
In our experiments, this decomposed head achieves accuracy comparable to the AVICI prediction head while empirically producing fewer cycles in the decoded graphs.

\subsection{Efficiency Analysis}
\noindent \textbf{Time efficiency.}
The dominant cost of data-stream axial attention comes from two terms:
(i) \emph{sample-axis} attention over \(m\) samples for each of the \(n\) variables, with cost \(\mathcal{O}(n m^{2})\); and
(ii) \emph{node-axis} attention over \(n\) variables for each of the \(m\) samples, with cost \(\mathcal{O}(m n^{2})\).
% (ignoring constant factors in the embedding dimension and number of heads).
With a reduction factor \(r\) applied every \(k\) blocks, the effective sample length at block \(b\) becomes \(m_b = m / r^{\lfloor b/k \rfloor}\).
The average per-block compute is therefore
\begin{align*}
\mathcal{C}_{\text{sample}}
\propto \frac{1}{B}\textstyle\sum_{b=0}^{B-1} n m_b^{2}
&= \frac{n m^{2}}{B}\textstyle\sum_{b=0}^{B-1} r^{-2\lfloor b/k \rfloor}, \\
\mathcal{C}_{\text{node}}
\propto \frac{1}{B}\textstyle\sum_{b=0}^{B-1} n^{2} m_b
&= \frac{n^{2} m}{B}\textstyle\sum_{b=0}^{B-1} r^{-\lfloor b/k \rfloor}.
\end{align*}
When \(B\) is a multiple of \(k\), these sums reduce to geometric series:
\(\sum_{b=0}^{B-1} r^{-2\lfloor b/k \rfloor} = k\sum_{i=0}^{B/k-1} r^{-2i}\) and
\(\sum_{b=0}^{B-1} r^{-\lfloor b/k \rfloor} = k\sum_{i=0}^{B/k-1} r^{-i}\).
In our experiments (\(B{=}10,k{=}2,r{=}2\)), given that the first reduction is delayed by one block, this yields \(36.60\%\) of the baseline sample-axis compute and \(48.13\%\) of the baseline node-axis compute.

\noindent \textbf{Space efficiency.} Given attention on row axis, standard attention mechanism stores axis-specific attention maps \(A\in\mathbb{R}^{R\times H\times C\times C}\), resulting in \(\mathcal{O}(RHC^{2})\) memory.
With tied attention weights~\citep{rao2021msa}, attention weights are shared across target axis and only \(A\in\mathbb{R}^{H\times C\times C}\) is stored, reducing attention-map memory to \(\mathcal{O}(HC^{2})\).
Analogously, for column-axis attention, the memory cost is reduced from $\mathcal{O}(CHR^{2})$ to $\mathcal{O}(HR^{2})$.

\section{Experiment}
\begin{table*}[h]
\caption{Model performance comparison. 
\(^\dagger\) indicates o.o.d settings. Time represents inference time. Note: DiffAN and FCI are excluded from large node or large sample size settings due to excessive time costs. '-' indicates that OA is not applicable, as CORR and INVCOV output symmetric matrices and thus provide no edge orientation. }
\label{tab:syn_sergio_split}
\centering
{\small
\setlength{\tabcolsep}{1.5pt}
\begin{tabular}{c|cccc|cccc|cccc|cccc|c}
\toprule
\multicolumn{18}{c}{\textbf{Synthetic} (sample size $=1000$)} \\
\midrule
\multirow{2}{*}{Model} & \multicolumn{4}{c}{Linear} & \multicolumn{4}{c}{NN non-add.} & \multicolumn{4}{c}{Sigmoid\(^\dagger\)} & \multicolumn{4}{c}{Polynomial\(^\dagger\)} & Time \\
& mAP & SHD & AUC & OA & mAP & SHD & AUC & OA & mAP & SHD & AUC & OA & mAP & SHD & AUC & OA & (s) \\
\midrule
\multicolumn{18}{c}{\textit{Setting: $n=100, |E|=400$}}\\
\midrule
CORR       & 20.1&578.0&79.8&- & 14.7&605.4&74.0&- & 28.4&501.2&86.7&- &  23.1&532.8&76.9&- & 0.0008 \\
INVCOV     & 34.1&491.2&93.6&- & 23.6&530.8&82.5&- & 32.9&477.8&90.5&- &  23.7&504.6&72.7&- & 0.0275 \\
\midrule
FCI        & 12.4&372.0&55.3&10.7 & 11.1&359.8&55.3&11.0 & 15.1&348.2&56.3&12.6 &   9.0&368.4&53.0&6.1 & 84.987 \\
NOTEARS    & 29.4&300.8&51.8&27.8 & 17.8&337.0&50.4&19.5 & 11.3&366.0&49.1&8.3 & 8.6&371.8&52.5&4.9 & 2170.2 \\
SDCD & 42.3&400.2&89.3&81.6 & 65.7&272.6&89.0&79.8 & 61.9&327.8&87.6&76.5 & 41.9&303.8&70.9&42.8 & 67.428\\
DiffAN     & 10.6&475.4&51.3&8.5 & 8.4&465.2&53.6&9.3 & 12.3&389.4&57.4&12.3 & 11.6&378.9&50.3&11.1 & 1973.4 \\
AVICI      & 25.9&394.0&80.2&81.4 & 32.3&361.2&81.1&81.5 & 22.7&371.8&68.4&63.2 & 5.6&384.6&45.9&36.8 & 0.2974 \\
SEA-gies   & 92.1&108.6&99.2&94.8 & 51.2&306.2&88.4&86.6 &  72.7&192.2&92.2&85.4 & 36.2&319.2&74.2&70.8 & 8.7759 \\
\sys (Ours)       & \textbf{99.6}&\textbf{15.2}&\textbf{100.0}&\textbf{100.0} & \textbf{89.0}&\textbf{105.6}&\textbf{98.5}&\textbf{99.5} & \textbf{84.4}&\textbf{125.8}&\textbf{95.0}&\textbf{94.6} & \textbf{50.3}&\textbf{252.2}&\textbf{79.4}&\textbf{81.7} & \textbf{0.0384} \\
\midrule
\multicolumn{18}{c}{\textit{Setting: $n=1000, |E|=2000\text{\(^\dagger\)}$}}\\
\midrule
CORR       &34.3&2376.6&99.5&- & 16.2&3031.0&93.9&- & 34.7&2304.2&97.7&- & 25.0&2472.6&83.7&- & 0.0455 \\
INVCOV     & 46.7&1996.6&99.8&- & 28.0&2432.2&92.6&- & 40.6&2056.6&97.9&- & 26.4&2359.0&83.9&- & 0.4412 \\
\midrule
FCI        & 32.9&1309.0&67.2&34.4 & 12.7&1721.2&58.7&17.4 & 8.6&1828.6&54.5&9.0 & 1.5&2008.8&50.7&1.4 & 2005.2\\
NOTEARS & 30.5&1388.6&50.6&30.5 & 20.5&1677.8&50.0&23.2 &11.0&1790.4&50.0&10.9 &7.3&1893.2&53.6&7.1 &10896\\
SDCD & 54.1&1793.2&99.3&\textbf{98.6} & 59.6&2015.0&87.9&76.0 & 48.5&1649.2&89.2&78.5 & \textbf{29.8}&\textbf{1798.4}&74.6&49.2 & 65.386\\
AVICI & 0.2&1985.8&47.5&46.2 & 0.9&1980.2&56.8&54.6 & 0.2&2006.8&39.6&41.9 & 0.1&2037.0&37.0&40.3 & 3.3407\\
SEA-gies   & 66.3&2944.8&98.2&80.2&  11.9&3227.4&73.9&66.2 & 48.1&1359.0&88.8&70.1 & 20.6&6814.2&72.0&58.9 & 218.23 \\
\sys (Ours)      & \textbf{96.6}&\textbf{230.0}&\textbf{100.0}&96.5 & \textbf{79.7}&\textbf{835.0}&\textbf{98.2}&\textbf{96.6} & \textbf{64.5}&\textbf{1064.6}&\textbf{95.3}&\textbf{79.0} & 18.9&3985.0&\textbf{78.1}&\textbf{59.7} & \textbf{0.8288} \\
\bottomrule
\end{tabular}
}
\vspace{8pt}

{\small
\setlength{\tabcolsep}{6pt}
\begin{tabular}{c|ccccc|ccccc}
\toprule
\multicolumn{11}{c}{\textbf{SERGIO-GRN} (sample size $=20000$)} \\
\midrule
\multirow{2}{*}{Model} & \multicolumn{5}{c}{$n=100, |E|=400$} & \multicolumn{5}{c}{$n=200, |E|=400$} \\
& mAP & SHD & AUC & OA & Time & mAP & SHD & AUC & OA & Time\\
\midrule
CORR       & 4.5&403.4&51.9&-&0.0128 & 1.1&409.8&52.7&-&0.0147\\
INVCOV     & 5.1&403.0&52.3&-&0.0793  & 1.2&409.8&52.4&-&0.0995\\
\midrule
NOTEARS & 4.1&492.8&50.5&2.0&5040.8 & 1.0&591.0&49.8&0.4&5805.9\\
SDCD & 4.0&764.6&49.0&5.1&579.68 & 2.1&1265.0&49.9&3.7&670.10\\
AVICI & \multicolumn{5}{c}{Out of memory} & \multicolumn{5}{c}{Out of memory}\\
SEA-gies   & 5.6&400.2&58.8&61.6&11.769 & 1.2&407.8&55.0&57.07&17.288\\ 
\sys (Ours)    & \textbf{54.0}&\textbf{290.8}&\textbf{90.5}&\textbf{94.2}&\textbf{1.3058}& \textbf{39.2}&\textbf{336.8}&\textbf{90.8}&\textbf{90.9}&\textbf{2.5195}\\
\bottomrule
\end{tabular}
}
\end{table*}

\subsection{Settings}
\noindent \textbf{Baselines.}
We evaluate our approach against several baselines spanning different paradigms: (1) constraint-based methods: Fast Causal Inference (FCI)~\citep{spirtes2013causal}; (2) score-based methods: NOTEARS~\citep{zheng2018dags}, SDCD~\citep{nazaretstable} (3) FCM-based methods: DiffAN~\citep{sanchez2023diffusion} (4) pre-training-based methods: AVICI~\citep{lorch2022amortized}, SEA~\citep{DBLP:journals/tmlr/WuBBJ25}.
We additionally include two fundamental statistical measures as reference points: global Pearson correlation (CORR) \citep{benesty2009pearson} and inverse covariance matrix (INVCOV) \citep{hartlap2007your}.

\noindent \textbf{Evaluation metrics.}
To evaluate causal discovery performance, we adopt eight metrics: Structural Hamming Distance (SHD), Mean Average Precision (mAP), Area Under the ROC Curve (AUC), Orientation Accuracy (OA), Cyclicity, and the AID metric family (Ancestor-AID, O-set-AID, Parent-AID)~\citep{henckel2024adjustment}. 
To evaluate causal discovery efficiency, we report two metrics: inference time and peak GPU memory. 

\subsection{Datasets}
\label{sec:dataset}
We consider two types of data: synthetic datasets generated from SCMs and semi-synthetic single-cell expression datasets simulated from gene regulatory networks (GRNs).

\noindent \textbf{Training data.} For synthetic data, we generate datasets based on Erd\H{o}s-R\'enyi and Scale-Free graphs. The graph size $n$ ranges from 10 to 500, with edge counts $|E| \in \{n, 2n, 3n, 4n\}$. The causal mechanisms include both linear and neural network (NN) functions with additive or non-additive Gaussian noise. For each graph, we sample 1,000 observations, consisting of observational and single-node interventional data in a $1:n$ ratio.
For single-cell GRNs, we utilize the SERGIO GRN simulator~\citep{dibaeinia2020sergio} to generate gene expression data. The underlying graph topologies are initialized using Erd\H{o}s-R\'enyi, Scale-Free, and Stochastic Block Models. Given the complexity of gene regulatory dynamics, we increase the sample size to 5,000 to ensure reliable structure learning. Consequently, to balance the computational overhead introduced by this larger sample size, we restrict the maximum graph size to $n=200$. Details are provided in Appendix \ref{app:data_gen}.
% \vspace{-3mm}

% \PB{do we need to put total number of dataset? edge counts?}
\begin{figure*}[h]
    \centering
    \includegraphics[width=1\textwidth]{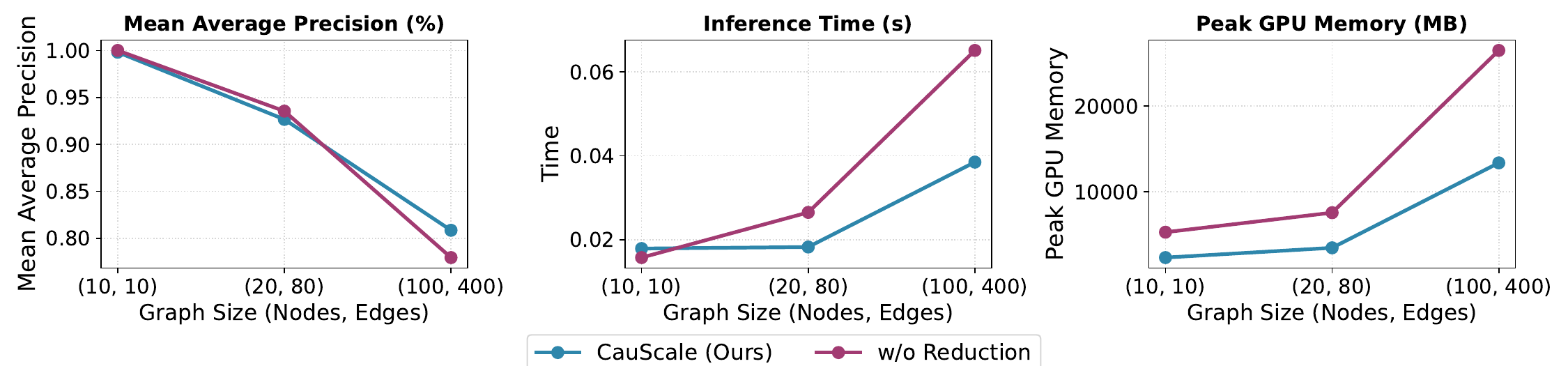}
    \caption{Comparison of w/ and w/o Reduction Unit.} 
    \vspace{-2mm}
    \label{fig:reduction_comparison}
\end{figure*}
 \begin{figure}[t]
    \centering
    \includegraphics[width=0.42\textwidth]{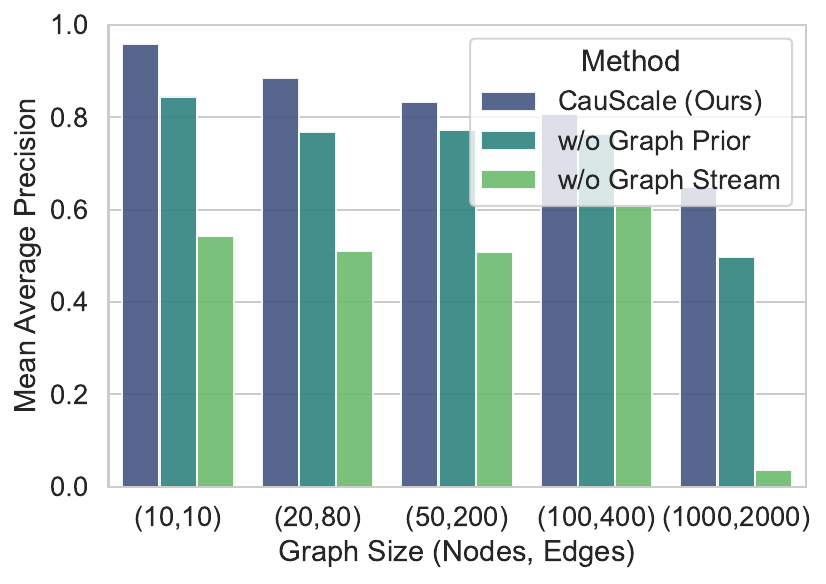}
    \caption{Advantage of \datagraphblock~over the block containing the data layer only.}
    \label{fig:block_adv}
    % \vspace{-4mm}
\end{figure}

\noindent \textbf{Tesing data.}
We construct separate benchmarks to assess scalability and robustness.
For synthetic data, we assess the model on graphs of varying scales, with $(n, |E|) \in \{(100, 400), (1000, 2000)\}$ and a sample size of 1,000.
We also introduce two out-of-distribution causal mechanisms: sigmoid and polynomial functions.
For GRN data, we evaluate on graphs with $(n, |E|) \in \{(100, 400), (200, 400)\}$, using a larger sample size of 20,000. For each testing configuration, we generate 5 independent Erd\H{o}s-R\'enyi graph instances, and report the averaged results.

\subsection{Model Performance}
\label{exp:performance}
Table \ref{tab:syn_sergio_split} compares \sys against other baselines on mAP, SHD, AUC and OA, with results averaged over 5 test graphs per setting. \sys demonstrates superior accuracy and efficiency across varying graph sizes and mechanisms. AID results are reported in Appendix~\ref{app:aid}.

\paragraph{Accuracy.} On the synthetic dataset with $n=100$, our model achieves near-perfect causal discovery on linear data (99.6\% mAP) and consistently outperforms baselines in non-linear settings. Notably, on large-scale graphs with 1000 nodes (a size unseen during training), \sys maintains high performance (96.6\% mAP for linear), despite our model being trained on graphs with at most 500 nodes. 
\sys also exhibits strong generalization capabilities on OOD mechanisms. Specifically, on the polynomial dataset ($n=100$) with more complex mechanisms, \sys achieves 50.3\% mAP, whereas the second and third best methods, SEA and SDCD, drop to 36.2\% and 41.9\%, respectively. On the SERGIO-GRN dataset, \sys achieves the best performance across all metrics and settings among all causal discovery baselines.

\paragraph{Time and space efficiency.} \sys achieves the shortest inference time among all evaluated causal discovery algorithms. Even on graphs with $n=1000$, our inference takes less than 1 second (0.8288s), achieving a speedup of over 13,000$\times$ compared to NOTEARS (10,896s), 200$\times$ compared to SEA-gies (218s), and 4$\times$ compared to AVICI (3.34s). Regarding space efficiency, on SERGIO-GRN dataset, AVICI fails with an Out-of-Memory error even at $n=100$. In contrast, \sys successfully scales to $n=200$ with 20,000 samples.

\paragraph{Cyclicity.} \sys produces zero cycles across all in-distribution graph-size settings (synthetic $n \leq 500$ and SERGIO-GRN). At $n=1000$—twice the largest training graph size—cycles emerge. To address cycles in such hard cases, we implement a cycle-breaking strategy that iteratively removes the edge with the lowest predicted score in each detected cycle until the graph is acyclic. We report SHD and the average number of edges removed per graph before and after applying the procedure in Table \ref{tab:cyclicity}.
For the polynomial case, an average of 103 edges per graph are removed by cycle-breaking. This is expected given that both graph size and causal mechanism fall outside the training distribution. For all other settings, cycle-breaking removes at most 0.6 edges per graph on average. Applying this cycle-breaking procedure consistently improves or preserves SHD.

\begin{table}[t]
\caption{Cyclicity and cycle-breaking impact for \sys at $n=1000, |E|=2000$ (o.o.d.\ graph size). Cyclicity denotes the fraction of predicted graphs containing at least one cycle. All in-distribution graph-size settings ($n \leq 500$) and SERGIO-GRN produce zero cycles.}
\label{tab:cyclicity}
\centering
{\small
\setlength{\tabcolsep}{4pt}
\begin{tabular}{c|cccc}
\toprule
& Linear & NN & Sigmoid$^\dagger$ & Poly.$^\dagger$ \\
\midrule
Cyclicity & 0.2 & 0.4 & 0.6 & 1.0 \\
Edges removed (avg.) & 0.4 & 0.6 & 0.6 & 103 \\
SHD (original) & 230.0 & 835.0 & 1064.6 & 3985.0 \\
SHD (after cycle-break) & 230.0 & 834.6 & 1064.4 & 3891.8 \\
\bottomrule
\end{tabular}
}
% \vspace{-4mm}
\end{table}

\subsection{Ablation Studies}
  \begin{figure*}[t]
    \centering
    \includegraphics[width=\textwidth]{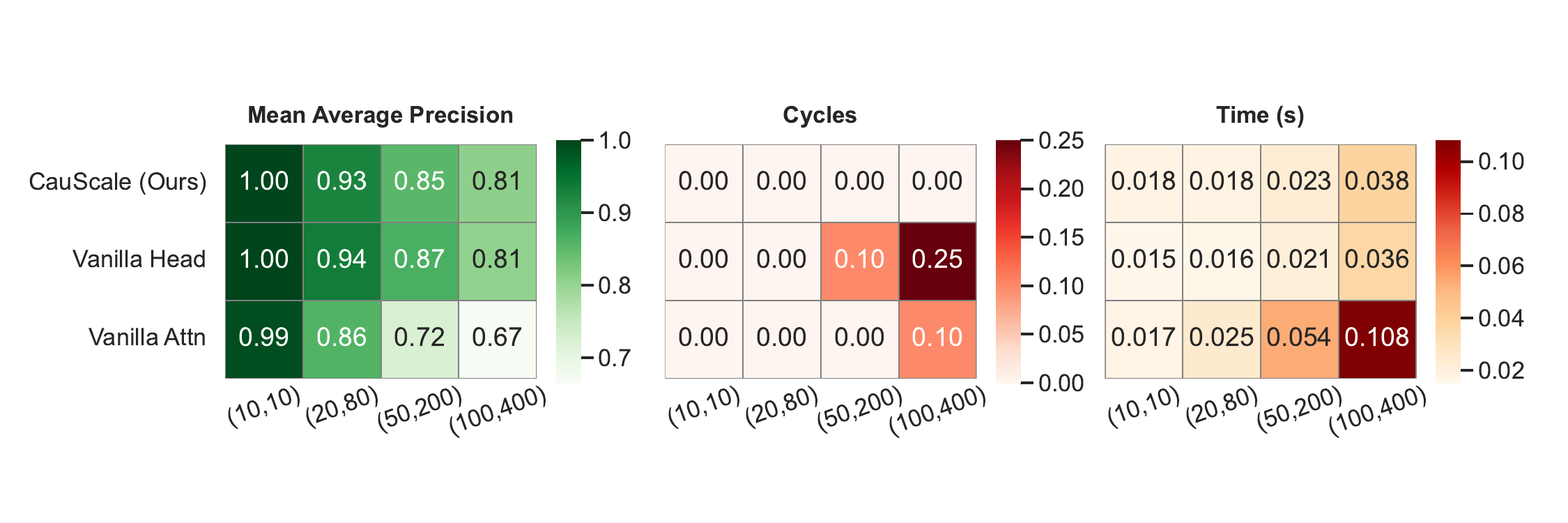}
    \vspace{-1cm}
    \caption{Ablation on components: Ours vs. AVICI.} 
    \label{fig:cycle_time_map}
\end{figure*}

\paragraph{W/ and w/o reduction unit}
We remove the \reductionunit and retrain the model on synthetic dataset to validate its importance. Since the network encounters Out-of-Memory errors on the original training set without the \reductionunit, we use a training subset with node number limited to $n \in \{10, 20, 100\}$. We train both \sys and the architecture w/o \reductionunit on this subset and evaluate their performance on synthetic test set with the same node number. Other settings in test set are the same with the evaluation benchmark. Results are averaged across all four distributions (linear, NN, sigmoid, and polynomial).
Figure \ref{fig:reduction_comparison} illustrates the mean average precision, inference time, and peak GPU memory usage. The benefits of the \reductionunit~become increasingly pronounced as the number of nodes increases, enabling the model to maintain high accuracy while achieving significantly faster inference speeds and lower GPU memory usage. Note that at $n=100$, using \reductionunit~is considerably better than without it. We attribute this to the training efficiency brought by \reductionunit. Without it, the network must process all $m$ samples throughout every layer, leading to higher training complexity and potentially degraded optimization. The reduction unit compresses redundant or noisy signals into more compact and informative representations, progressively reducing the computational burden and allowing the network to learn more effective representations. 

\begin{table}[t]
\caption{Mean Average Precision (\%) comparison across different pooling strategies in the Reduction Unit. We use average pooling in \sys.}
\label{tab:reduce_strategy}
\centering
{\small
\begin{tabular}{c|ccc}
\toprule
\textbf{(Nodes, Edges)} & \textbf{Max} & \textbf{Strided} & \textbf{Average} \\ 
\midrule
(10,10)           & 94.3                     & 100.0                     & 100.0         \\
(20,80)           & 76.0                     & 84.6                      & 92.6          \\
(50,200)          & 71.2                     & 78.5                      & 85.4          \\
\bottomrule
\end{tabular}
}
% \vspace{-4mm}
\end{table}

\paragraph{Graph components}
We conduct ablation studies by (1) removing the input graph prior by setting the graph input to an all-ones vector (w/o Graph Prior) and (2) removing the graph stream while retaining only the data stream (w/o Graph Stream). We retrain the two ablation versions on our synthetic train set. Figure \ref{fig:block_adv} demonstrates the performance comparison on our synthetic benchmark. Removing the graph stream causes the most significant performance degradation, highlighting the importance of it. Removing the graph prior causes less performance degradation but still yields inferior results compared to our full model, validating the importance of the inductive bias.

\paragraph{Attention shape and output head}
We conduct an ablation study to analyze the components of our model relative to AVICI. Specifically, we evaluate two variants: (1) replacing the tied-attention mechanism in \sys with the vanilla attention from \citep{lorch2022amortized} (Vanilla Attn), and (2) replacing our prediction head in Equation \ref{eq:attn_head} with the vanilla prediction head from \citep{lorch2022amortized} (Vanilla Head). The latter projects the graph embedding $h_{B-1}^G \in \mathbb{R}^{n\times n \times d}$ to a logit $g \in \mathbb{R}^{n\times n}$ using a Feed-Forward Network, followed by a sigmoid function to obtain edge probabilities. Due to the high space complexity of vanilla self-attention, we limited this study to a subset of the training set with node counts $n \in \{10, 20, 100\}$.
Figure \ref{fig:cycle_time_map} illustrates the results averaged across all causal mechanisms. The results show the advantage of our implemented components. First, the tied-attention mechanism demonstrates superior computational efficiency, achieving an inference speed six times faster than vanilla attention on graphs with $(n=100, |E|=400)$. It also yields higher mean average precision score, striking a perfect balance between efficiency and accuracy. Second, our pairwise processing head leads to a much more lower degree of cyclicity (0.0\%) compared to the vanilla prediction head used in AVICI (0.0-0.25\%).

\paragraph{Pooling strategy of reduction unit}
We conduct a lightweight ablation study using graphs with node counts of $n=\{10, 20, 50\}$ during training and testing to efficiently evaluate different pooling strategies within the \reductionunit. In addition to the average pooling employed in \sys, we compare two alternative downsampling techniques: strided pooling and max pooling. Table \ref{tab:reduce_strategy} demonstrates that average pooling consistently achieves superior performance across experimented graph sizes.

\subsection{Generalization Analysis}
\begin{figure}[t]
    \centering
\includegraphics[width=.45
\textwidth]{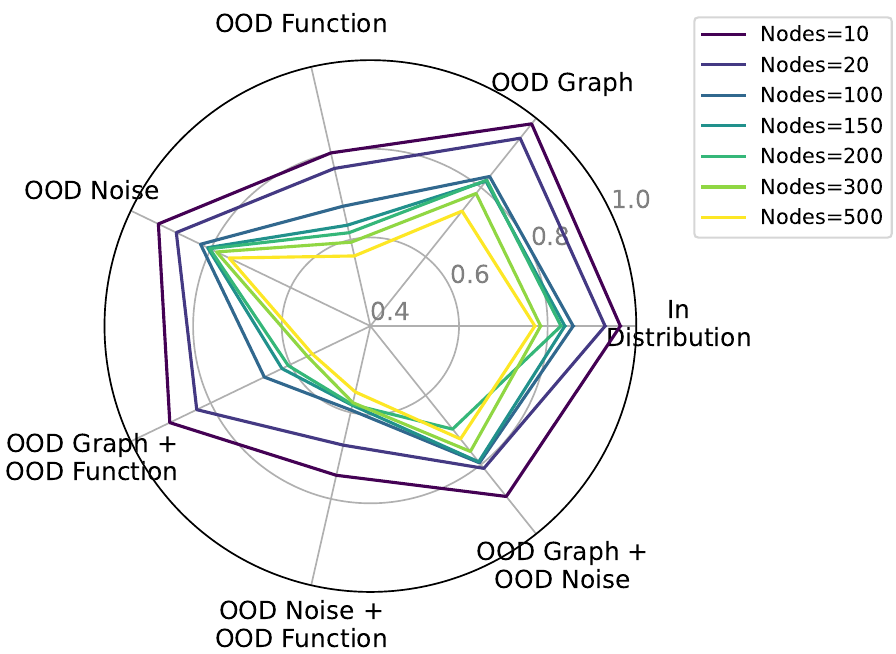}
        \caption{Generalization property of \sys on OOD graphs, noise, and mechanism functions.}
        % \vspace{-3mm}
        \label{fig:ood_adapt}
\end{figure}

\begin{figure}[t]
    \centering
        \includegraphics[width=0.48\textwidth]{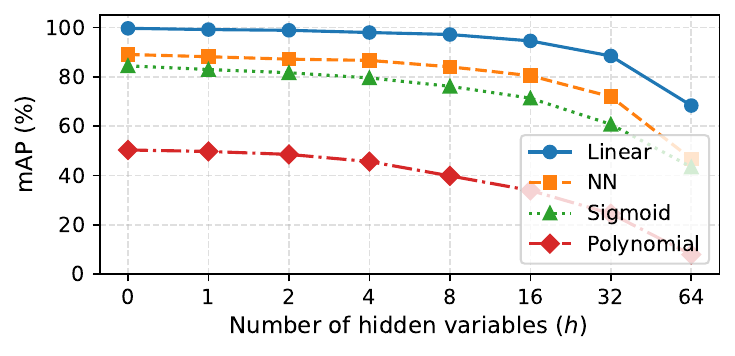}
        \vspace{-0.6cm}
        \caption{Model performance (mAP \%) under varying numbers of hidden variables $h$, with $n_{\text{obs}} = 100 - h$ observed variables ($n=100$, $|E|=400$).} 
% \vspace{-4mm}
        \label{fig:hidden_vars}
\end{figure}

We further evaluate the generalization capability of the synthetic-data-trained model described in Section \ref{exp:performance} across OOD graph structures generated by Stochastic Block Models, OOD noise distributions (uniform and Laplace), and OOD functions (sigmoid and polynomial). Figure \ref{fig:ood_adapt} reveals that the model demonstrates strong generalization to OOD graph structures but exhibits greater sensitivity to OOD noise patterns and mechanism functions. These findings indicate that future model training should prioritize generating datasets with more diverse noise distributions and mechanism functions to enhance robustness.

\subsection{Robustness to Latent Confounders}
We test the model robustness under latent confounder. We simulate latent confounding via selective node removal. Specifically, we hide the top-$h$ nodes ranked by out-degree in graphs with $n=100$, as high out-degree nodes act as strong common causes. After removal, the observable sub-graph contains $n_{obs}=100-h$ nodes. We vary $h$ as 1, 2, 4, 8, 16, 32, 64 to cover a wide range of confounding severity. Table \ref{fig:hidden_vars} reports mAP (\%) on the observable sub-DAG. Under mild confounding ($h \leq 8$), the model remains robust across all mechanisms, with linear mAP staying above 97\%. Under severe confounding ($h=64$), performance drops substantially, which is expected given that the observable sub-graph loses much of its causal structure. 
% Overall, the model remains reliable under moderate confounding.

\subsection{Robustness to Graph Prior Error}
We evaluate model sensitivity to the graph prior by conducting a controlled 
experiment that varies the number of samples used to compute the prior 
(from 10 to 500), using the same model and settings as the main experiment, 
across graph sizes ranging from $n=10$ to $n=1000$. Results are summarized 
in Table~\ref{tab:prior_sensitivity}. Two patterns emerge: (1) the model 
remains robust to a noisy prior on smaller graphs; (2) sensitivity to prior 
quality increases with graph size.

\subsection{Sample Size analysis}
We conduct a sample size analysis on our trained models in Section \ref{exp:performance}, with inference sample sizes varying from 500 to 4000 for synthetic data and from 1000 to 20000 for SERGIO-GRN data. Figure \ref{fig:sample_size_analysis} shows that the model trained on synthetic data achieves peak performance with sample size of 2000, while the model for SERGIO-GRN achieves the best performance with a sample size of 20000. This suggests that more complex causal mechanisms necessitate larger sample sizes for accurate inference.

\begin{table}[t]
\caption{Model performance (mAP \%) under varying numbers of samples 
used to compute the graph prior.}
\label{tab:prior_sensitivity}
\centering
{\small
\begin{tabular}{l cccc}
\toprule
& \multicolumn{4}{c}{Number of Samples for Prior} \\
\cmidrule(lr){2-5}
Graph Size & 10 & 100 & 200 & 500 \\
\midrule
$n=10$,   $|E|=10$   & 85.9 & 94.2 & 95.7 & 95.8 \\
$n=100$,  $|E|=400$  & 75.6 & 77.9 & 79.0 & 80.8 \\
$n=1000$, $|E|=2000$ & 32.6 & 48.1 & 58.1 & 64.9 \\
\bottomrule
\end{tabular}
}
\end{table}

\begin{figure}[t]
    \centering
        \includegraphics[width=0.48\textwidth]{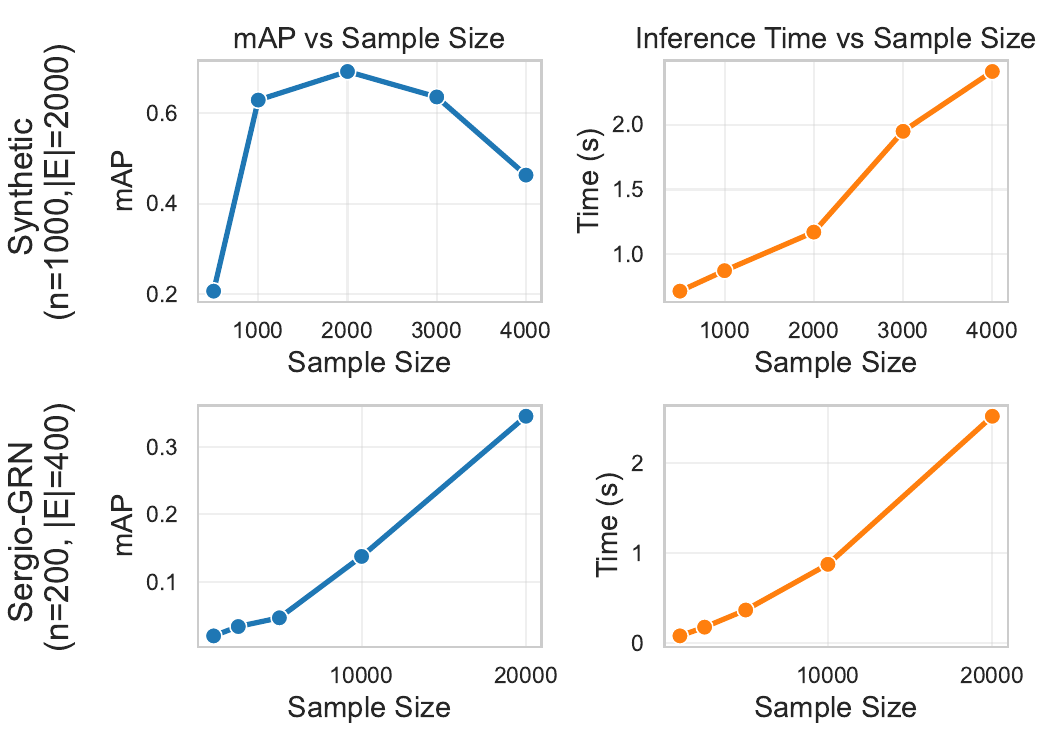}
        \vspace{-0.6cm}
        \caption{Sample size analysis by mean average precision.} 
        \label{fig:sample_size_analysis}
\end{figure}

\section{Conclusion}
We presented \sys, an efficient neural architecture for large-scale causal discovery that addresses the time and memory bottlenecks of prior methods. \sys combines a reduction unit for time efficiency, tied attention weights for space efficiency, and a two-stream design that preserves structural signals under compression. Extensive experiments on synthetic and semi-synthetic single-cell benchmarks show that \sys scales training to 500-node graphs and enables inference on graphs with up to 1,000 nodes, achieving strong accuracy with 4\(\times\)–13{,}000\(\times\) speedups over existing approaches. These results suggest a practical direction for pre-training efficient neural models for causal discovery at scale. While \sys demonstrates strong performance on synthetic and semi-synthetic benchmarks, evaluation on real-world observational datasets remains an important direction for future work. The reliance on simulated training data may limit generalization to settings where the data-generating process deviates significantly from the assumed SCM framework.

\clearpage

\section*{Acknowledgements}
This work is supported in part by the Shanghai Artificial Intelligence Laboratory and in part by the
National Natural Science Foundation of China under Grant 625B2131.

\section*{Impact Statement}
This paper presents \sys, a neural architecture for efficient and scalable causal discovery on large graphs. The primary impact is to make causal structure learning more computationally accessible—reducing runtime and memory requirements—and thereby supporting faster hypothesis generation in data-intensive scientific domains. As with any causal discovery method, results can be sensitive to data quality, modeling assumptions, and distribution shift; spurious edges may arise if the input data are noisy or misspecified. We emphasize that the predicted graphs should be treated as hypotheses and validated by domain experts and, where applicable, downstream experiments before being used in high-stakes decision-making.

% \nocite{langley00}

\bibliography{example_paper}
\bibliographystyle{icml2026}

%%%%%%%%%%%%%%%%%%%%%%%%%%%%%%%%%%%%%%%%%%%%%%%%%%%%%%%%%%%%%%%%%%%%%%%%%%%%%%%
%%%%%%%%%%%%%%%%%%%%%%%%%%%%%%%%%%%%%%%%%%%%%%%%%%%%%%%%%%%%%%%%%%%%%%%%%%%%%%%
% APPENDIX
%%%%%%%%%%%%%%%%%%%%%%%%%%%%%%%%%%%%%%%%%%%%%%%%%%%%%%%%%%%%%%%%%%%%%%%%%%%%%%%
%%%%%%%%%%%%%%%%%%%%%%%%%%%%%%%%%%%%%%%%%%%%%%%%%%%%%%%%%%%%%%%%%%%%%%%%%%%%%%%
\newpage
\appendix
\onecolumn

\section{Evaluation metrics.}
To evaluate causal discovery performance, we adopt the following metrics for causal structure learning:
(1) \textbf{Structural Hamming Distance (SHD)}: the minimum number of edge insertions, deletions, and reversals required to transform the predicted graph into the ground-truth graph.
(2) \textbf{Mean Average Precision (mAP)}: the area under the precision-recall curve computed over all candidate edges, averaged across the graph.
(3) \textbf{Area Under the ROC Curve (AUC)}: the area under the ROC curve computed over all candidate edges, averaged across the graph.
(4) \textbf{Orientation Accuracy (OA)}: the fraction of ground-truth directed edges for which the model assigns a higher probability to the correct direction.
(5) \textbf{Cyclicity}: the fraction of predicted graphs containing at least one directed cycle.
(6) \textbf{Adjustment Identification Distance (AID)}~\citep{henckel2024adjustment}: a family of three metrics (Ancestor-AID, O-set-AID, Parent-AID) that measure the fraction of causal effects that would be incorrectly inferred from the predicted graph, each corresponding to a different adjustment strategy. 
% AID captures functional connectivity by accounting for causal chaining. Lower values indicate better performance.

\section{Data Generation Details}
\label{app:data_gen}
\subsection{Causal Graph Details}
We evaluate our method on various causal graphs. Below, we provide a detailed description of the graph models used in this study. Both the synthetic data and SERGIO-GRN data are generated based on these Directed Acyclic Graph (DAG) structures.

\begin{itemize}
    \item \textbf{Erd\H{o}s-R\'enyi (ER):} A standard random graph model where edges are added between any pair of nodes with a fixed probability $p$. This results in a graph where the degree distribution is approximately Poissonian, representing networks with uniform connectivity patterns.
    \item \textbf{Scale-Free (SF):} Generated using the Barab\'asi-Albert preferential attachment process. New nodes are more likely to attach to existing nodes with high degrees. This topology creates networks with "hubs" and follows a power-law degree distribution, simulating real-world biological (e.g., gene regulatory networks) or social networks.

    \item \textbf{Stochastic Block Model (SBM):} 
    A generative model for graphs with community structure. Nodes are assigned to one of $K$ latent blocks (clusters). Edge probabilities depend on the block membership of the nodes (high probability within blocks, low probability between blocks). This is particularly useful for modeling modular systems, such as protein-protein interaction networks with functional modules.
\end{itemize}
% =================================================================

\subsection{Synthetic Data Generation}
The details of the distribution settings for the synthetic data are explained below. We generate data following the code and settings in \citet{DBLP:journals/tmlr/WuBBJ25} and \citet{brouillard2020differentiable}. Let $X_i$ denote the target node, $\text{PA}_i$ its parents, $N_i$ an independent noise variable, and $W$ the randomly initialized weights.
\begin{itemize}
    \item \textbf{Linear}: The most fundamental assumption where dependencies are linear: $X_i = W_i \text{PA}_i + N_i$.
    \item \textbf{Neural Networks (NN-Add)}: The mechanism follows $X_i = \text{MLP}(\text{PA}_i) + N_i$, where \text{MLP} is a random initialized Multi-Layer Perceptron (MLP) with a single hidden layer and nonlinear activations (PReLU).
    \item \textbf{Neural Networks (NN)}: The noise is concatenated with the parents as input to the neural network: $X_i = \text{MLP}(\text{PA}_i, N_i)$.
    \item \textbf{Sigmoid Additive}: 
$X_i = \sum W_j\sigma(PA_{ij}) + N_i$, simulating biological saturation effects.
    \item \textbf{Polynomial}: $X_i = \sum_{k=0}^{2} W_k \text{PA}_i^k + N_i$, modeling polynomial dependencies.
\end{itemize}

Root nodes are initialized using a Uniform distribution of Uniform(-1,1). We set noise to $N_i \sim 0.4 \cdot \mathcal{N}(0, \sigma^2)$, where $\sigma^2 \sim \text{Uniform}(1, 2)$.
We apply interventions (hard intervention) one node at a time, covering all node and setting their mechanisms to Uniform(-1,1). For smaller graphs ($N \in \{10, 20, 100\}$), we generate 600 distinct graph structures for each parameter combination. For larger graphs ($N \in \{150, 200, 300, 500\}$), we generate 300 distinct structures per combination due to the increasing computational time required for generation.

\subsection{SERGIO-GRN Data Generation}
% \PB{language}
We generated the data using a slightly modified version of the SERGIO-GRN~\citep{dibaeinia2020sergio} code from AVICI~\citep{lorch2022amortized}. The simulator generates gene expression data by sampling from the steady state of a dynamic system, described by Stochastic Differential Equations (SDEs)~\citep{dibaeinia2020sergio}. Downstream regulatory interactions are modeled using Hill functions~\citep{chu2009models}, ensuring the realistic gene behavior. 
All samples are generated under gene intervention settings. Specifically, we conduct gene knockouts by setting the target gene expression level to zero. Regarding graph structures, we employ Erd\H{o}s-R'enyi (ER), Scale-Free, and Stochastic Block Models for training. The number of cell types is set between 5 and 10. We generate 200 distinct graph structures for each setting. We consider training graphs with sizes $N \in \{10, 20, 30, 50, 80, 100, 150, 200\}$, where the number of edges $E \in \{2N, 4N, 6N\}$.

\section{Baseline Implementation Details}
\label{app:baseline}
\paragraph{INVCOV and CORR} For both INVCOV and CORR, we discretize the predicted continuous values to match the sparsity of the ground truth. The threshold is set to the $(1 - \frac{e}{n^2})$-th quantile of the predictions, where $e$ and $n$ represent the number of edges and nodes in the ground truth, respectively.
\paragraph{FCI} We implement the Fast Causal Inference (FCI) algorithm using the \textit{causal-learn} library\footnote{\url{https://causal-learn.readthedocs.io}}. FCI is a constraint-based causal discovery algorithm that identify causal relationships in the presence of latent confounders and selection bias. We use Fisher-Z test with $\alpha=0.05$ significance Level during experiment.

\paragraph{NOTEARS} We utilize the official implementation of the NOTEARS algorithm\footnote{\url{https://github.com/xunzheng/notears}}. Following the default setting in the repository, we apply a threshold of $0.3$ to the estimated weight matrix to filter out weak edges before computing the Structural Hamming Distance (SHD).

\paragraph{DiffAN} We implemented DiffAN by adopting the official hyperparameter configurations, which the original authors \citep{sanchez2023diffusion} noted are largely hard-coded and robust across diverse datasets. To strictly adhere to the non-approximated version of the algorithm, the \texttt{residue} parameter was set to True.
For downstream evaluation requiring continuous scores such as AUC and mAP, we extracted the edge existence p-values and applied a $-\log_{10}$ transformation to derive the final confidence estimates. For metrics requiring a binary adjacency matrix such as SHD, we followed the hyperparameters $\alpha = 0.05$ as the threshold for edge pruning.

\paragraph{SDCD} We utilized the official implementation of SDCD\footnote{\url{https://github.com/azizilab/sdcd}}. 
All hyperparameters followed the default settings provided in the official repository. We specified GPU as the computing device to ensure efficiency. To compute SHD, we apply a discretization threshold of 0.5.

\paragraph{AVICI} Due to the high memory requirements encountered when attempting to train AVICI on our datasets (resulting in Out-of-Memory errors), we utilized the pre-trained checkpoints provided in the official repository\footnote{\url{https://github.com/larslorch/avici}}.
For synthetic data, we employed the \texttt{scm-v0} model, which was pre-trained on diverse linear and non-linear datasets. 
For the SERGIO-GRN dataset, we utilized the \texttt{neurips-grn} checkpoint. All other settings remained consistent with our experimental setup, utilizing both interventional and observational data. We apply a discretization threshold of 0.5.

\paragraph{SEA} For synthetic Stage 1 (10–100 nodes), we used SEA's publicly released checkpoint, as it was trained on data with the same distribution and generation code as ours. For the synthetic Stage 2 and SERGIO-GRN models, we train SEA with the same training data as ours. We use the GIES-based architecture. All training and testing configurations followed the default settings of SEA. We apply a discretization threshold of 0.5. SEA takes 48 GPU-hours to train in the synthetic Stage 2 and 72 GPU-hours in SERGIO-GRN.

\section{\sys Implementation Details}

\paragraph{Model Configuration.}
The model consists of 10 layers with 128-dimensional embeddings and 16 attention heads. This configuration was determined through hyperparameter tuning across layers $\in \{8, 10\}$ and embedding dimensions $\in \{64, 128, 256\}$.

\paragraph{Data Preprocessing.} Each set of data $\mathcal{D}$ is standardized variable-wise. For each variable $x_i$, we compute the normalized value via $\hat{x}_i = (x_i - \mu_i) / \sigma_i$, where $\mu_i$ represents the empirical mean and $\sigma_i$ is the standard deviation.

\paragraph{Hardware Details}
All training and inference tasks are conducted on NVIDIA H200 GPUs (141GB memory per GPU) and 164 CPU cores. Note that the baselines (e.g., AVICI) detailed in Section \ref{app:baseline} are also evaluated in this environment.

\paragraph{Training Strategy}
We use the Adam optimizer with a learning rate of $1 \times 10^{-4}$. Training is performed on 8 GPUs using distributed data parallelism.
For synthetic data, we adopt a two-stage training strategy. This design is motivated by two key factors: (1) It allows the model to capture fundamental causal relationships on simpler graphs (10--100 nodes) before generalizing to complex structures (up to 500 nodes). (2) Grouping graphs by size significantly reduces memory waste caused by excessive zero-padding when batching graphs of vastly different scales (e.g., mixing 10-node and 500-node graphs).
Based on this, the training proceeds as follows:
\begin{itemize}
    \item Stage 1 (10--100 nodes): Batch size of 8 for 37 hours.
    \item Stage 2 (150--500 nodes): Batch size of 1 for 2.75 hours.
\end{itemize}
For the SERGIO-GRN dataset, as the graph sizes vary within a narrower range (10--200 nodes), we train the model in a single stage with a batch size of 1 for 44 hours.

\section{Time and Memory Analysis}
\subsection{Time Analysis}
We provide a FLOP-level analysis of \sys's forward pass. Let $F$ denote the FFN inner dimension. LayerNorm, dropout, and softmax are omitted as their costs are negligible relative to linear projections and attention.
\paragraph{Input embedding $C_\text{embed}$}
Two parallel linear layers project the data stream and the graph prior:
\begin{equation*}
  C_\text{embed} = \underbrace{4mnd}_{\text{data: Linear}(2\to d)} + \underbrace{2n^2d}_{\text{prior: Linear}(1\to d)}.
\end{equation*}
\paragraph{Output head $C_\text{head}$}
The prediction head applies two linear layers to $n(n-1)/2$ concatenated edge-pair embeddings of dimension $2d$:
\begin{equation*}
  C_\text{head} \approx 4n^2d^2 + 6n^2d.
\end{equation*}
\paragraph{Axial-attention layer}
An axial-attention layer on input $[m_b, n, d]$ performs column attention, row attention, and an FFN:
\begin{center}
\begin{tabular}{lll}
\toprule
Operation & Cost \\
\midrule
Column attn: QKV + output projections & $8\,m_b n d^2$ \\
Column attn: scores + aggregation     & $4\,m_b n^2 d$ \\
Row attn: QKV + output projections    & $8\,m_b n d^2$ \\
Row attn: scores + aggregation        & $4\,m_b^2 n d$ \\
FFN (width $F$)                       & $4\,m_b n F d$ \\
\midrule
Total & $16\,m_b nd^2 + 4\,m_b n^2 d + 4\,m_b^2 nd + 4\,m_b nFd$ \\
\bottomrule
\end{tabular}
\end{center}

\paragraph{Per-block cost $C^{(b)}$}
Each data-graph block at step $b$ contains three sub-layers:

\smallskip
\noindent\textit{Data layer} — one axial-attention layer:
\begin{equation*}
  C_\text{data}^{(b)} = 16\,m_b nd^2 + 4\,m_b n^2 d + 4\,m_b^2 nd + 4\,m_b nFd.
\end{equation*}

\noindent\textit{Data2graph layer} — axial-attention (FFN width $d$), two pooling heads, and one outer-product:
\begin{equation*}
  C_\text{d2g}^{(b)} = 20\,m_b nd^2 + 4\,m_b n^2 d + 4\,m_b^2 nd + 8\,nd^2 + 2\,n^2 d.
\end{equation*}

\noindent\textit{Graph layer} — linear projection plus graph axial-attention (sequence length $n$):
\begin{equation*}
  C_\text{graph}^{(b)} \approx 18\,n^2 d^2 + 8\,n^3 d + 4\,n^2 Fd.
\end{equation*}

\noindent Summing the three sub-layers:
\begin{equation*}
  C^{(b)} = 36\,m_b nd^2 + 8\,m_b n^2 d + 8\,m_b^2 nd + 4\,m_b nFd
           + 18\,n^2 d^2 + 8\,n^3 d + 2\,n^2 d + 4\,n^2 Fd.
\end{equation*}

\paragraph{Total cost $C_\text{total}$}
Let $S_1 = \sum_{b=0}^{B-1} m_b$ and $S_2 = \sum_{b=0}^{B-1} m_b^2$ denote the layer-wise sums. Then:
\begin{align*}
  C_\text{total}
  &= C_\text{embed} + C_\text{head} + \sum_{b=0}^{B-1} C^{(b)} \notag\\
  &= 36\,S_1 nd^2 + 8\,S_1 n^2 d + 8\,S_2 nd + 4\,S_1 nFd
   + (18B+4)\,n^2 d^2 + 8B\,n^3 d + 4B\,n^2 Fd \notag\\
  &\quad + 2B\,n^2 d + 8Bnd^2 + 4mnd + 8n^2d.
\end{align*}

\subsection{Memory Analysis}
% All costs count elements.
\begin{proposition}[Peak Activation Memory During Inference]
With the hierarchical reduction $m_b = m/r^{\lfloor b/k\rfloor}$, the peak GPU memory for activations during inference satisfies:
\begin{equation*}
  M_\text{inference} = \Theta\!\left(mn(d+F) + Hm^2\right).
\end{equation*}
\end{proposition}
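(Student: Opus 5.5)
We establish matching upper and lower bounds by locating where the largest live activation tensor occurs during a forward pass. The governing observation is that inference keeps no autograd graph. At any moment the live memory is therefore bounded by a constant number of tensors belonging to the sub-layer currently executing, plus the carried-over embeddings $h_b^D\in\mathbb{R}^{m_b\times n\times d}$ and $h_b^G\in\mathbb{R}^{n\times n\times d}$. So it suffices to list the intermediate tensors of each sub-layer, using the operation table in the Time Analysis, and bound their sizes as functions of $m_b$.

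\textbf{Upper bound.} We go through the sub-layers of block $b$ in order.
\begin{itemize}
\item Data-layer column attention (along the node axis) keeps $Q,K,V,O$ of size $O(m_b n d)$. Its tied attention map $A\in\mathbb{R}^{H\times n\times n}$ costs $O(Hn^2)$, by the Tied Attention Weights section.
\item Row attention (along the sample axis) again keeps $O(m_b n d)$ projections. Its tied map is $A\in\mathbb{R}^{H\times m_b\times m_b}$, costing $O(Hm_b^2)$.
\item The FFN hidden activation has size $O(m_b n F)$.
\item The Data2Graph layer adds the same axial terms (with FFN width $d$), plus $O(nd)$ pooled embeddings and the $O(n^2)$ message $\omega_b^{D\to G}$.
\item The graph layer uses $O(n^2 d)$ embeddings, $O(n^2F)$ FFN activations, and $O(Hn^2)$ tied maps.
\item The reduction unit and the head use $O(m_b n d)$ and $O(n^2 d)$ respectively.
\end{itemize}
Since $m_b\le m_0=m$ for all $b$, the peak is at most $O(mn(d+F)+Hm^2+n^2(d+F)+Hn^2)$. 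The graph-stream terms are absorbed using the standing regime $n\le m$ that the paper assumes in the Reduction Unit discussion ($m$ is one to three orders of magnitude larger than $n$): then $n^2(d+F)\le mn(d+F)$ and $Hn^2\le Hm^2$. This gives the $O(\cdot)$ bound. Its constant is independent of $B$, because earlier blocks' activations are freed.

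\textbf{Lower bound.} In block $0$ (before any reduction, $m_0=m$), the FFN of the data layer must materialize its hidden tensor of size $mnF$ while the input $h_0^D$ of size $mnd$ is still live for the residual connection. This gives $\Omega(mn(d+F))$. Row attention in block $0$ must materialize the tied score tensor $A\in\mathbb{R}^{H\times m\times m}$ before the softmax-weighted aggregation, which gives $\Omega(Hm^2)$. The peak is at least the maximum of the two, and that maximum is within a factor $2$ of their sum, so $\Theta$ follows.

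\textbf{Main obstacle.} The delicate step is making the $n\le m$ absorption of the graph-stream terms legitimate and explicit. Without it, the correct statement would carry an extra $n^2(d+F)+Hn^2$ term. I would state this assumption explicitly, as is consistent with the paper's setting, and also note that the reduction schedule affects only constants, not the order.
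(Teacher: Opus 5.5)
Your proposal is correct and follows essentially the same route as the paper. Both arguments place the peak at block $0$, where $m_0=m$, and identify the same dominant tensors: the $mnd$ data embedding, the $mnF$ FFN intermediate, and the tied $H\times m\times m$ sample-axis attention map. Both absorb the graph-stream terms using $m>n$, which the paper also invokes explicitly. Your separate lower-bound step via $\max(a,b)\ge \tfrac12(a+b)$ just spells out what the paper gets from its exact expression $(m+n)nd+\max(mnF,\,Hm^2)$.
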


\begin{proof}
During inference, only the current block's live tensors occupy memory; earlier activations are freed. Since $m_b$ is non-increasing in $b$, the peak occurs at block $b=0$ where $m_0 = m$.
The dominant live tensors at block $b=0$ are:
\begin{center}
\begin{tabular}{llll}
\toprule
Tensor & Lifetime & Shape & Elements \\
\midrule
Sample tensor               & persistent & $[m, n, d]$   & $mnd$ \\
Graph tensor                & persistent & $[n, n, d]$   & $n^2d$ \\
\midrule
Sample FFN intermediate     & transient  & $[m, n, F]$   & $mnF$ \\
Graph FFN intermediate      & transient  & $[n, n, F]$   & $n^2F$ \\
Sample col-attn weight      & transient  & $[H, n, n]$   & $Hn^2$ \\
Sample row-attn weight      & transient  & $[H, m, m]$   & $Hm^2$ \\
Graph row/col-attn weight   & transient  & $[H, n, n]$   & $Hn^2$ \\
\bottomrule
\end{tabular}
\end{center}

The persistent tensors are always live:
\begin{equation*}
M_\text{persistent} = mnd + n^2d = (m+n)nd.
\end{equation*}
The transient tensors are not simultaneously live; the peak transient cost is the maximum over the sub-operations within a block. Since $m > n$, the largest transient term is therefore:
\begin{equation*}
M_\text{transient}^{\max} = \max(mnF,\, Hm^2).
\end{equation*}
The total peak memory is:
\begin{equation*}
M_\text{inference} = M_\text{persistent} + M_\text{transient}^{\max} = (m+n)nd + \max(mnF,\, Hm^2).
\end{equation*}
Since $m > n$, we have $(m+n)nd = \Theta(mnd)$. Absorbing lower-order terms:
\begin{equation*}
M_\text{inference} = \Theta\!\left(mn(d+F) + Hm^2\right). \qedhere
\end{equation*}
\end{proof}

\section{More results}
\subsection{AID Metric Results}
\label{app:aid}
\begin{table}[!htbp]
\caption{AID metric results (lower is better). $^\dagger$ indicates o.o.d.\ settings. Standard deviations are reported after $\pm$. For cyclic predictions, a confidence-guided greedy cycle-breaking procedure is applied before computing AID.}
\label{tab:aid}
    \centering
{\small
\setlength{\tabcolsep}{3pt}
\begin{tabular}{c|ccc|ccc}
\toprule
\multicolumn{7}{c}{\textbf{Synthetic} (sample size $=1000$)} \\
\midrule
\multirow{2}{*}{Model} & \multicolumn{3}{c}{Linear} & \multicolumn{3}{c}{NN non-add.} \\
& Ancestor-AID & O-set-AID & Parent-AID & Ancestor-AID & O-set-AID & Parent-AID \\
\midrule
\multicolumn{7}{c}{\textit{Setting: $n=100, |E|=400$}}\\
\midrule
SEA-GIES    & 0.174{\tiny±0.012} & 0.220{\tiny±0.013} & 0.464{\tiny±0.037} & 0.207{\tiny±0.018} & 0.243{\tiny±0.026} & 0.676{\tiny±0.040} \\
FCI         & 0.232{\tiny±0.017} & 0.232{\tiny±0.016} & 0.737{\tiny±0.025} & 0.234{\tiny±0.019} & 0.234{\tiny±0.018} & 0.752{\tiny±0.044} \\
AVICI       & 0.230{\tiny±0.012} & 0.231{\tiny±0.012} & 0.769{\tiny±0.034} & 0.227{\tiny±0.017} & 0.228{\tiny±0.016} & 0.713{\tiny±0.042} \\
SDCD        & 0.228{\tiny±0.012} & 0.229{\tiny±0.011} & 0.749{\tiny±0.038} & 0.159{\tiny±0.031} & 0.223{\tiny±0.019} & 0.545{\tiny±0.044} \\
NOTEARS     & 0.225{\tiny±0.013} & 0.230{\tiny±0.013} & 0.755{\tiny±0.025} & 0.227{\tiny±0.015} & 0.230{\tiny±0.015} & 0.743{\tiny±0.046} \\
\sys (Ours) & \textbf{0.012}{\tiny±0.006} & \textbf{0.051}{\tiny±0.021} & \textbf{0.082}{\tiny±0.031} & \textbf{0.043}{\tiny±0.009} & \textbf{0.149}{\tiny±0.015} & \textbf{0.372}{\tiny±0.027} \\
\midrule
\multicolumn{7}{c}{\textit{Setting: $n=1000, |E|=2000\text{\(^\dagger\)}$}}\\
\midrule
SEA-GIES    & 0.050{\tiny±0.079} & 0.063{\tiny±0.099} & 0.121{\tiny±0.119} & 0.015{\tiny±0.007} & 0.020{\tiny±0.017} & 0.155{\tiny±0.024} \\
FCI         & 0.010{\tiny±0.001} & 0.010{\tiny±0.001} & 0.083{\tiny±0.010} & 0.011{\tiny±0.001} & 0.011{\tiny±0.001} & 0.121{\tiny±0.013} \\
AVICI       & 0.011{\tiny±0.001} & 0.011{\tiny±0.001} & 0.139{\tiny±0.015} & 0.012{\tiny±0.001} & 0.012{\tiny±0.001} & 0.144{\tiny±0.012} \\
SDCD        & 0.010{\tiny±0.001} & 0.010{\tiny±0.001} & 0.065{\tiny±0.008} & 0.011{\tiny±0.001} & 0.012{\tiny±0.001} & 0.112{\tiny±0.004} \\
NOTEARS     & 0.010{\tiny±0.001} & 0.010{\tiny±0.001} & 0.105{\tiny±0.014} & 0.011{\tiny±0.001} & 0.011{\tiny±0.001} & 0.127{\tiny±0.009} \\
\sys (Ours) & \textbf{0.003}{\tiny±0.002} & \textbf{0.005}{\tiny±0.003} & \textbf{0.026}{\tiny±0.013} & \textbf{0.006}{\tiny±0.001} & \textbf{0.008}{\tiny±0.002} & \textbf{0.056}{\tiny±0.009} \\
\bottomrule
\end{tabular}
}
\vspace{2pt}

{\small
\setlength{\tabcolsep}{3pt}
\begin{tabular}{c|ccc|ccc}
\toprule
\multicolumn{7}{c}{\textbf{Synthetic} (sample size $=1000$, o.o.d.\ mechanisms)} \\
\midrule
\multirow{2}{*}{Model} & \multicolumn{3}{c}{Sigmoid\(^\dagger\)} & \multicolumn{3}{c}{Polynomial\(^\dagger\)} \\
& Ancestor-AID & O-set-AID & Parent-AID & Ancestor-AID & O-set-AID & Parent-AID \\
\midrule
\multicolumn{7}{c}{\textit{Setting: $n=100, |E|=400$}}\\
\midrule
SEA-GIES    & 0.217{\tiny±0.014} & 0.248{\tiny±0.018} & 0.656{\tiny±0.011} & 0.225{\tiny±0.012} & 0.228{\tiny±0.011} & 0.751{\tiny±0.038} \\
FCI         & 0.240{\tiny±0.015} & 0.240{\tiny±0.015} & 0.754{\tiny±0.028} & 0.227{\tiny±0.012} & 0.227{\tiny±0.012} & 0.749{\tiny±0.040} \\
AVICI       & 0.242{\tiny±0.015} & 0.242{\tiny±0.015} & 0.760{\tiny±0.027} & 0.228{\tiny±0.010} & 0.228{\tiny±0.010} & 0.800{\tiny±0.028} \\
SDCD        & 0.227{\tiny±0.016} & 0.236{\tiny±0.015} & 0.703{\tiny±0.029} & 0.217{\tiny±0.012} & 0.219{\tiny±0.012} & 0.696{\tiny±0.042} \\
NOTEARS     & 0.240{\tiny±0.016} & 0.241{\tiny±0.015} & 0.757{\tiny±0.033} & 0.226{\tiny±0.010} & 0.226{\tiny±0.010} & 0.748{\tiny±0.033} \\
\sys (Ours) & \textbf{0.139}{\tiny±0.009} & \textbf{0.211}{\tiny±0.014} & \textbf{0.502}{\tiny±0.037} & \textbf{0.204}{\tiny±0.009} & \textbf{0.210}{\tiny±0.011} & \textbf{0.631}{\tiny±0.029} \\
\midrule
\multicolumn{7}{c}{\textit{Setting: $n=1000, |E|=2000\text{\(^\dagger\)}$}}\\
\midrule
SEA-GIES    & 0.011{\tiny±0.002} & 0.011{\tiny±0.002} & 0.136{\tiny±0.018} & 0.047{\tiny±0.072} & 0.062{\tiny±0.100} & 0.222{\tiny±0.131} \\
FCI         & 0.011{\tiny±0.002} & 0.011{\tiny±0.002} & 0.133{\tiny±0.021} & \textbf{0.011}{\tiny±0.001} & \textbf{0.011}{\tiny±0.001} & 0.147{\tiny±0.008} \\
AVICI       & 0.011{\tiny±0.002} & 0.011{\tiny±0.002} & 0.144{\tiny±0.022} & 0.012{\tiny±0.001} & 0.012{\tiny±0.001} & 0.148{\tiny±0.008} \\
SDCD        & \textbf{0.010}{\tiny±0.002} & \textbf{0.010}{\tiny±0.002} & \textbf{0.120}{\tiny±0.020} & \textbf{0.011}{\tiny±0.001} & \textbf{0.011}{\tiny±0.001} & \textbf{0.143}{\tiny±0.006} \\
NOTEARS     & 0.011{\tiny±0.002} & 0.011{\tiny±0.002} & 0.134{\tiny±0.020} & \textbf{0.011}{\tiny±0.001} & \textbf{0.011}{\tiny±0.001} & 0.145{\tiny±0.008} \\
\sys (Ours) & 0.011{\tiny±0.002} & 0.013{\tiny±0.005} & 0.126{\tiny±0.023} & 0.030{\tiny±0.025} & 0.048{\tiny±0.047} & 0.208{\tiny±0.039} \\
\bottomrule
\end{tabular}
}
\vspace{8pt}

{\small
\setlength{\tabcolsep}{4pt}
\begin{tabular}{c|ccc|ccc}
\toprule
\multicolumn{7}{c}{\textbf{SERGIO-GRN} (sample size $=20000$)} \\
\midrule
\multirow{2}{*}{Model} & \multicolumn{3}{c}{$n=100, |E|=400$} & \multicolumn{3}{c}{$n=200, |E|=400$} \\
& Ancestor-AID & O-set-AID & Parent-AID & Ancestor-AID & O-set-AID & Parent-AID \\
\midrule
SEA-GIES    & 0.233{\tiny±0.023} & 0.233{\tiny±0.023} & 0.792{\tiny±0.041} & 0.046{\tiny±0.010} & 0.046{\tiny±0.010} & 0.308{\tiny±0.056} \\
SDCD        & 0.268{\tiny±0.025} & 0.271{\tiny±0.025} & 0.907{\tiny±0.045} & 0.091{\tiny±0.023} & 0.100{\tiny±0.022} & 0.602{\tiny±0.125} \\
NOTEARS     & 0.236{\tiny±0.019} & 0.238{\tiny±0.019} & 0.802{\tiny±0.038} & 0.047{\tiny±0.009} & 0.048{\tiny±0.010} & 0.331{\tiny±0.061} \\
\sys (Ours) & \textbf{0.188}{\tiny±0.033} & \textbf{0.220}{\tiny±0.025} & \textbf{0.643}{\tiny±0.073} & \textbf{0.042}{\tiny±0.008} & \textbf{0.042}{\tiny±0.008} & \textbf{0.257}{\tiny±0.037} \\
\bottomrule
\end{tabular}
}
\end{table}

We report the performance under the AID metric family (Ancestor-AID, O-set-AID, Parent-AID)~\citep{henckel2024adjustment} in Table \ref{tab:aid} as a complement to Table \ref{tab:syn_sergio_split}. Lower AID values indicate better performance. AID metrics are formally defined only for DAGs and cannot be computed on cyclic graphs. To ensure a unified comparison, we apply the same cycle-breaking strategy as in Section~\ref{exp:performance}, which iteratively detects cycles in the predicted graph and removes the edge with the lowest predicted score within each cycle, repeating until the graph is acyclic.

CauScale achieves the best AID in most of the settings, except for $n=1000$ with sigmoid and polynomial, a challenging OOD setting in which both graph size and causal mechanism fall outside our training distribution (training graphs: $n\leq 500$, linear/NN mechanisms only).
SDCD has the lowest score in these two cases. However, SDCD produces cyclic predictions for 100\% of graphs at $n=1000$ across all four mechanisms. At $n=100$, SDCD also cycles 100\% on 3/4 mechanisms (75\% of graphs). Note that without our cycle-breaking procedure, SDCD's outputs at $n=1000$ (100\% cyclic) would not be valid inputs for AID computation at all.

\subsection{Error estimates}
Below we provide full results of CauScale in Table \ref{tab:error_estimate} in addition to the main experiment in Table \ref{tab:syn_sergio_split} with standard deviation as the error estimate (mean ± std). Each setting are conducted under 5 independent graphs.

\begin{table}[h]
\caption{Full results of \sys with standard deviation, in addition to Table \ref{tab:syn_sergio_split}. $^\dagger$ indicates o.o.d.\ settings.}
\label{tab:error_estimate}
\centering
{\small
\setlength{\tabcolsep}{4pt}
\begin{tabular}{l|cccc}
\toprule
\multicolumn{5}{c}{\textbf{Synthetic} (sample size $=1000$)} \\
\midrule
Mechanism & mAP (\%) & SHD & AUC (\%) & OA (\%) \\
\midrule
\multicolumn{5}{c}{\textit{$n=100, |E|=400$}} \\
\midrule
Linear               & $99.6\pm0.3$  & $15.2\pm7.0$     & $100.0\pm0.0$ & $99.9\pm0.1$ \\
NN                   & $89.0\pm0.5$  & $105.6\pm4.9$    & $98.5\pm0.1$  & $99.5\pm0.6$ \\
Sigmoid$^\dagger$    & $84.4\pm2.1$  & $125.8\pm23.9$   & $94.9\pm0.9$  & $94.6\pm1.0$ \\
Polynomial$^\dagger$ & $50.3\pm3.5$  & $252.2\pm16.8$   & $79.4\pm1.2$  & $81.7\pm2.8$ \\
\midrule
\multicolumn{5}{c}{\textit{$n=1000, |E|=2000$$^\dagger$}} \\
\midrule
Linear               & $96.6\pm4.1$  & $230.0\pm199.5$   & $99.9\pm0.1$ & $96.5\pm3.0$ \\
NN                   & $79.7\pm4.7$  & $835.0\pm381.3$   & $98.2\pm0.6$ & $96.6\pm0.5$ \\
Sigmoid$^\dagger$    & $64.5\pm7.3$  & $1064.6\pm204.3$  & $95.3\pm0.9$ & $79.0\pm3.3$ \\
Polynomial$^\dagger$ & $18.9\pm13.8$ & $3985.0\pm2975.4$ & $78.1\pm6.1$ & $59.7\pm8.6$ \\
\bottomrule
\end{tabular}
}
\vspace{8pt}

{\small
\setlength{\tabcolsep}{4pt}
\begin{tabular}{l|cccc}
\toprule
\multicolumn{5}{c}{\textbf{SERGIO-GRN} (sample size $=20000$)} \\
\midrule
Graph Size & mAP (\%) & SHD & AUC (\%) & OA (\%) \\
\midrule
$n=100, |E|=400$ & $54.0\pm11.8$ & $290.8\pm60.3$ & $90.5\pm3.3$ & $94.2\pm1.9$ \\
$n=200, |E|=400$ & $39.2\pm11.9$ & $336.8\pm33.7$ & $90.8\pm2.7$ & $90.9\pm2.9$ \\
\bottomrule
\end{tabular}
}
\end{table}

\subsection{Effectiveness on observational datasets}
We test the model for synthetic data in our main experiment (Section \ref{exp:performance}) on pure observational data; the result is shown below. This indicates that our model illustrates robustness without intervention data.
\begin{table}[h]
\centering
\caption{Model performance under pure observational data.}
\begin{tabular}{llcccc}
\toprule
Setting & Method & mAP & SHD & AUC & OA \\
\midrule
\multirow{4}{*}{$n=100$, $|E|=400$}
 & Linear      & 98.3 &   39.8 & 99.9 & 99.0 \\
 & NN non-add. & 86.0 &  133.6 & 98.1 & 98.8 \\
 & Sigmoid     & 76.4 &  155.8 & 90.9 & 88.9 \\
 & Polynomial  & 45.3 &  259.2 & 73.7 & 72.1 \\
\midrule
\multirow{4}{*}{$n=1000$, $|E|=2000$}
 & Linear      & 97.7 &  168.0 & 100.0 & 96.9 \\
 & NN non-add. & 71.8 & 1012.8 &  97.1 & 91.9 \\
 & Sigmoid     & 63.1 & 1033.6 &  95.2 & 77.6 \\
 & Polynomial  & 28.7 & 2074.2 &  82.0 & 63.7 \\
\bottomrule
\end{tabular}
\end{table}

\end{document}